\documentclass[twoside,11pt]{article}
\usepackage{amsmath,amssymb}
\usepackage{blindtext}
\usepackage{subfigure,placeins}
\usepackage{float}
\usepackage{multirow}
\usepackage{booktabs}
\usepackage{xcolor}

\newcommand{\wstar}{w^*(\lambda)}
\newcommand{\wnhat}{\hat{w}_n(\lambda)}
\newtheorem{assumption}{Assumption}
%

%
%
%

\usepackage{jmlr2e}

\usepackage{amsmath}



\usepackage{lastpage}


\ShortHeadings{Role of Sample Splitting towards Optimal NNs}{Gong and Zhang}
\firstpageno{1}

\begin{document}

\title{Towards Optimal Neural Networks: the Role of Sample Splitting in Hyperparameter Selection}

\author{\name Shijin Gong \email gongshijin0409@mail.ustc.edu.cn \\
       \addr School of Management\\
       University of Science and Technology of China\\
       Hefei, Anhui, China
       \AND
       \name Xinyu Zhang \email xinyu@amss.ac.cn \\
       \addr Academy of Mathematics and Systems Science\\
       Chinese Academy of Sciences\\
       Beijing, China}

\editor{}

\maketitle



\begin{abstract} 
When artificial neural networks have demonstrated exceptional practical success in a variety of domains, investigations into their theoretical characteristics, such as their approximation power, statistical properties, and generalization performance, have concurrently made significant strides. In this paper, we construct a novel theory for understanding the effectiveness of neural networks, which offers a perspective distinct from prior research. Specifically, we explore the rationale underlying a common practice during the construction of neural network models: sample splitting. Our findings indicate that the optimal hyperparameters derived from sample splitting can enable a neural network model that asymptotically minimizes the prediction risk. We conduct extensive experiments across different application scenarios and network architectures, and the results manifest our theory's effectiveness. 
\end{abstract}

\begin{keywords} 
    sample splitting, neural networks, hyperparameter, asympotitic optimality
\end{keywords}



\section{Introduction} \label{sec:1}
Artificial neural networks (NNs) have garnered remarkable success across various industries, leading to a constant stream of research efforts aimed at exploring novel neural network models to enhance the scope of application and improve performance. Over the past few decades, researchers have developed convolutional neural networks (CNNs) \citep{fukushima1980neocognitron,krizhevsky2012imagenet}, recurrent neural networks (RNNs) \citep{rumelhart1986learning,hochreiter1997long}, and further models based on more complicated structures or components \citep[e.g.,][]{scarselli2008graph,vaswani2017attention}, which have demonstrated efficacy in solving a board range of practical problems. In parallel, efforts by scientists to uncover the theoretical characteristics underpinning neural networks, such as approximation power, statistical properties, and generalization ability, have resulted in a tremendous amount of studies and discussions. In what follows, we provide a concise overview of these major strides in neural network research that are currently garnering interest. Within this context, our framework emerges as a novel perspective aiming to expand the ongoing effort.


\citet{hornik1989multilayer} presented an universal approximation theorem for feed-forward neural networks (FNNs), leading to numerous subsequent research on approximation capabilities of NNs \citep[e.g.,][]{hornik1991approximation,leshno1993multilayer,pinkus1999approximation}. These studies concluded that neural networks with sufficiently many hidden neurons possess the capacity to approximate any function within a broad class of functions. More literature has emerged in recent years concerning the approximation capabilities of neural networks, particularly deep networks that employ ReLU activation functions \citep[e.g.,][]{lu2017expressive,lin2018resnet}. 

\citet{white1990connectionist} examined NNs from a nonparametric perspective, showing that the approximation mentioned in \citet{hornik1989multilayer} are learnable by showcasing the consistency property of NNs, provided the complexity of the network increases with the sample size at specific growth rates. Work on consistency of neural networks continued \citep[]{mielniczuk1993consistency,lugosi1995nonparametric}. Further, \citet{barron1994approximation,mccaffrey1994convergence} developed error bounds for neural networks that demonstrate the rate of convergence. These studies laid the foundation for subsequent work on convergence rates, such as \citet{chen1999improved,kohler2005adaptive}, and \citet{shen2023asymptotic} established the asymptotic normality of neural network estimators along with their convergence rates. In recent years, discussions on learnability of NNs has broadened to encompass more than just a statistical perspective, increasingly incorporating an optimization viewpoint. This shift in focus has also seen a transition from exploring shallow neural networks to deep neural networks (DNNs)\citep[e.g.,][]{brutzkus2017globally,du2018gradient}.
The objective of these investigations is to elucidate the reasons why the networks trained using algorithms, such as stochastic gradient descent (SGD), achieve successful convergence. Furthermore, the prevalent applications of DNNs have prompted research into the convergence under the over-parameterized setting \citep[e.g.,][]{li2018learning,allen2019convergence}.

Another vital facet of theoretical work on neural networks has been centered around understanding their ability to extrapolate to new data. This ability, widely known as generalization, is evaluated by the generalization error, which quantifies a network's performance on unseen data. To establish generalization error bounds for NNs, abundant theories have focused on sample complexity, utilizing measures such as VC-dimension and Rademacher complexity \citep[e.g.,][]{baum1988size,blumer1989learnability,HAUSSLER199278,bartlett1996valid}. These proposed theoretical generalization bounds quantify the amount of data required to ensure accurate performance of an NN. \citet{bousquet2002stability} introduced the notions of stability for learning algorithms, providing an alternative pathway for deriving generalization bounds, which was followed by numerous subsequent studies such as \citet{hardt2016train}. The work of \citet{zhang2017understanding}, which implemented a randomization test challenging various prior studies, stimulated continued research on improving generalization bounds \citep[e.g.,][]{neyshabur2017exploring,golowich2018size}. Simultaneously, research developed under the over-parameterized setting has also gained traction in recent years \citep[e.g.,][]{arora2019fine,allen2019convergence}.

In this paper, we incorporate the framework studied in traditional statistic theory \citep{white1989learning} and offer a novel perspective on the question of why neural networks perform well, which goes beyond the characteristics of an individual NN. Commonly, the process of building a neural network necessarily involves the determination about two types of parameters. The first, known as model parameters, are gradually learned through the training process, while the second, referred to as hyperparameters, must be specified prior to training. Hyperparameters play a pivotal role in shaping the structure of neural networks and monitoring the learning process, contributing to substantial variability that allows neural networks to adapt to a diverse range of problems. Consequently, optimization of hyperparameters is a critical task in real-world applications of neural networks.

A variety of algorithms \citep{yang2020hyperparameter} can achieve hyperparameter optimization. A striking majority of these algorithms are reliant on sample splitting. Typically, after hyperparameters are set, it is necessary to continually evaluate the model’s performance during training. This process can then direct subsequent hyperparameter tuning to improve the model.
To do this evaluation, a validation set comprising samples that have not been used for training is often required, thus necessitating a split from the original dataset prior to training. In this work, our theoretical and empirical explorations underscore the importance of sample splitting, since this procedure enables us to attain a nearly optimal neural network from those with all alternative hyperparameters. Specifically, hyperparameters tuned by optimizing the performance on validation set are asymptotically optimal in the sense of minimizing prediction risk. Therefore, the practical success of neural networks can be attributed to sample splitting, a common practice in machine learning.

 
In fact, the efficacy of sample splitting can be recognized in several traditional statistical theories. For example, \citet{li1987asymptotic} and \citet{shao1997asymptotic} explored multiple procedures in the context of model selection. They derived the asymptotic optimality of these procedures, such as cross-validation, AIC, $C_p$, and other criteria, within the realm of linear models. The lack of explicit reference to sample splitting in these studies probably stems from the fact that the majority of these procedures do not necessarily require it. It is notable, however, that cross-validation, one of the procedures studied, is based on sample splitting, since it partitions the data into separate subsets for model training and evaluation purposes.
As such, sample splitting has the potential to account for the effectiveness of models from a model selection standpoint. This perspective coincides with the central thesis of this paper, which investigates the success of neural network models through the lens of sample splitting. It is worth mentioning that our theory is established based on a basic splitting procedure that segregates samples into one training set and one validation set, yet it remains flexible enough to accommodate more sophisticated methods such as cross-validation.

The remainder of our article is structured as follows. Section \ref{sec:2} introduces the mathematical modeling of NNs and their hyperparameters, along with the problem statement and the objective. In Section \ref{sec:3}, we present the main theoretical results, their conditions, and an interpretation from a perspective of error decomposition. Section \ref{sec:4} presents experiments designed to validate our findings. Conclusion and future work are drawn in Section \ref{sec:5}. Detailed proofs of our main results and additional discussions can be found in the Appendices.

\section{Preliminaries} \label{sec:2}
Suppose that we have $(n_1+n_2)$ independent and identically distributed (i.i.d.) observations $S=\{(X_i,Y_i)\}_{i=1}^{n_1+n_2}$, where $X_i$ is a vector of predictors and $Y_i$ is a scalar dependent variable. These samples can be partitioned into training set  $S_{train} = \{(X_i^{t},Y_i^{t})\}_{i=1}^{n_1}$ and validation set $S_{val}= \{(X_i^{v},Y_i^{v})\}_{i=1}^{n_2}$. The available set of hyperparameters of NNs, including continuous, discrete and categorical variables, is denoted by $\Lambda$. In this paper, we assume that $\Lambda$ is a finite set, i.e. $|\Lambda|<\infty$. The choices of categorical hyperparameters, e.g., the activation function and optimizer, are intrinsically finite. The domains of continuous and discrete hyperparameters are generally bounded in practical applications \citep{bergstra2011algorithms},  which results in finite choices of discrete ones. 
Regarding the rationality of assuming continuous hyperparameters, for instance, the learning rate, as belonging to a finite set, several frequently used hyperparameter optimization techniques consider a discretized continuous hyperparameter space and perform optimization on it, including grid search and random search \citep{bergstra2012random}. Consequently, it is reasonable to develop our theory under the assumption that $|\Lambda|<\infty$.

Given any $\lambda \in \Lambda$, a hypothesis space of neural networks is defined, and we denote a neural network in this space by $f_\lambda(\cdot,w): \mathbb{R}^{p} \rightarrow \mathbb{R}$, which represents a neural network with the hyperparameter $\lambda$ and the model parameter $w\in \mathbb{R}^{d(\lambda)}$. Here, the model parameter $w$ can interpreted as originating from weights connecting layers, vectors of biases for nodes, and other similar elements, which are then reshaped into column vectors and stacked on top of one another. The term $d(\lambda)$ represents the dimension of $w$, that is, the number of trainable variables in the model, and is certainly finite for any given $\lambda$. Therefore, the maximum dimension of model parameters under $\lambda\in\Lambda$, which is equal to $ \sup_{\lambda\in\Lambda} d(\lambda)$, is finite in this paper as $|\lambda|<\infty$, which is also independent of sample size. 

For any hyperparameter $\lambda$, we can train an NN model by $S_{train}$, achieving the parameter $\hat{w}_{n_1}(\lambda)$ and the trained neural network $f_{\lambda}(\cdot,\hat{w}_{n_1}(\lambda)):\mathbb{R}^{p} \rightarrow \mathbb{R}$. We define the validation loss as the mean squared loss on validation set.

\begin{align}
    L_{n_2}(\lambda) =\frac{1}{n_2} \sum_{i=1}^{n_2} \{f_{\lambda}(X_i^{v},\hat{w}_{n_1}(\lambda))-Y_i^{v}\}^2.
\end{align}
To optimize the hyperparameter, the ideal method is to select $\lambda$ that achieves the best performance on a validation set. Therefore, we define the selected optimal hyperparameter as
\begin{align}
\label{eq:hat_lambda}
    \hat{\lambda} = \underset{\lambda\in\Lambda}{\operatorname{argmin}}\ L_{n_2}(\lambda).
\end{align}
Note that the existence of $\hat{\lambda}$ can be guaranteed by $|\Lambda|<\infty$. 

In real-world applications, the training set size $n_1$ and the validation set size $n_2$ are often set according to a proportion, such as a $90\%$-$10\%$ split of the total sample size. This means that $n_1$ and $n_2$ are generally of the same order with respect to the total sample size. Thus, without loss of generality and for notational simplicity in the ensuing text, we assume $n_1=n_2=n$.
By adopting $\hat{\lambda}$ as the hyperparameter and completing the training process, we obtain a network $f_{\hat{\lambda}}(\cdot,w_{n}(\hat{\lambda}))$. Our objective now is to uncover properties of $\hat{\lambda}$ or $f_{\hat{\lambda}}(\cdot,w_{n}(\hat{\lambda}))$. Suppose we have a set of new i.i.d. observations, denoted as $S_{0} = \{(X_j^{0},Y_j^{0})\}_{j=1}^{n_0}$, but we do not have the knowledge of dependent variables. As We will demonstrate in the next section, $\hat{\lambda}$ is, in the asymptotic sense, optimal for making prediction for this new observation set $S_0$. 

In the subsequent sections, we use $\|\cdot\|$ to denote the $\ell_2$-norm of a vector and $\top$ to represent matrix transpose. Absolute constants, which may vary from line to line are denoted by $c$ and $C$. For notational simplicity, we abbreviate the validation sample $(X_i^{v},Y_i^{v})$ as $(X_i,Y_i)$ for $i=1,...,n$, when it does not cause ambiguity. Since the samples are i.i.d., we sometimes use $(X_0,Y_0)$ to represent any sample from the set $S_0$ also for notational convenience.





\section{Theoretical Results} \label{sec:3}
In this section, we present the asymptotic properties of the selected hyperparameter $\hat{\lambda}$. We define the following notations 
\begin{align}
    \label{eq:3} &L(\lambda;X,Y) =  \{f_{\lambda}(X,\hat{w}_n(\lambda))-E(Y|X)\}^2,\\
    \label{eq:4} &L_0(\lambda) = \frac{1}{|S_0|}\sum_{(X,Y)\in S_0}L(\lambda;X,Y)= \frac{1}{n_0}\sum_{j=1}^{n_0} L(\lambda;X_j^0,Y_j^0),\\
    \label{eq:5} \text{and}\qquad &R_0(\lambda) = E(L_0(\lambda)), 
\end{align}
where $|S_0|$ denotes the number of elements in set $S_0$. For hyperparameter $\lambda\in \Lambda$, $L(\lambda;X,Y)$ measures the squared loss between the conditional expectation $E(Y|X)$ and the network output $f_{\lambda}(\cdot,\hat{w}_n(\lambda))$ with input $X$; $L_0(\lambda)$ is the mean prediction loss on set $S_0$; and $R_0(\lambda)$ is the prediction risk, the expectation of $L_0(\lambda)$. Since observations in $S_0$ are i.i.d., we can reformulate $R_0(\lambda)=E\{f_{\lambda}(X,\hat{w}_n(\lambda))-E(Y|X)\}^2$, where $(X,Y)\in S_{val}\bigcup S_0 $ can represent any sample independent of $S_{train}$. 

\subsection{Optimality of the Selected Hyperparameter}
We present the assumptions that are required for our theoretical results. Unless otherwise specified, the following limiting processes are studied with respect to $n\rightarrow\infty$.

\begin{assumption}
\label{assumption:1}
For any $\lambda\in\Lambda$, there exist a parameter $w^*(\lambda)$, such that $\|\hat{w}_n(\lambda)-w^*(\lambda)\| = O_p(n^{-1/2})$.
\end{assumption}
Assumption \ref{assumption:1} imports a high-level condition ensuring that for any given hyperparameter $\lambda$, the trained parameters of the neural networks $\wnhat$ will have a limiting value $\wstar$. This assumption holds under some reasonable conditions; see conditions in Theorem 1 of \citet{white1989learning}. 

With the importation of $\wstar$, we can further introduce these notations 
\begin{align}
    &L^*(\lambda;X,Y) =  \{f_{\lambda}(X,w^*(\lambda))-E(Y|X)\}^2,\notag\\
    &L_0^*(\lambda) = \frac{1}{|S_0|}\sum_{(X,Y)\in S_0}L^*(\lambda;X,Y)= \frac{1}{n_0}\sum_{j=1}^{n_0} L^*(\lambda;X_j^0,Y_j^0),\notag\\
    \text{and}\qquad &R^*_0(\lambda) = E(L^*_0(\lambda)),\notag 
\end{align}
which are similar to the definitions of $L(\lambda;X,Y)$, $L_0(\lambda)$, and $R_0(\lambda)$ in (\ref{eq:3}), (\ref{eq:4}), and (\ref{eq:5}) except for substituting $\wstar$ for $\wnhat$.

\begin{assumption}
\label{assumption:2}
For any $(X,Y)\in S_{train}\bigcup S_{val}\bigcup S_{0}$ and for any $\lambda\in\Lambda$, (i) $E|f_{\lambda}(X,w^*(\lambda))-Y|$ and $E|f_{\lambda}(X,w^*(\lambda))-E(Y|X)|$ are both $O(1)$; (ii) $f_{\lambda}(X,w)$ is differentiable with respect to $w$, and there exists a constant $\rho$, such that 
\begin{align}
\label{eq:assumption2}
    E\sup_{w^{0}\in \mathcal{O}(w^*(\lambda),\rho)}\bigg\|\frac{\partial f_{\lambda}(X,w)}{\partial w}\Big|_{w=w^0}\bigg\|=O(1),
\end{align}
where $\mathcal{O}(w^*(\lambda),\rho)$ denotes an open ball with center $w^*(\lambda)$ and radius $\rho$.
\end{assumption}
Assumption \ref{assumption:2} makes some restrictions on boundedness and differentiability. Assumption \ref{assumption:2}(i) holds when $f_{\lambda}(X,w^*(\lambda))$, $Y$ and $E(Y|X)$ are bounded. Examples to illustrate the rationality of Assumption \ref{assumption:2}(ii) under some primitive conditions are given in Appendix \ref{appendix:1}. 

Let $\xi_n = \inf_{\lambda\in\Lambda} R^*_0(\lambda)$ denote the minimum risk among all $\lambda \in \Lambda$ with neural networks using limiting value of parameters $\wstar$, and we make these three assumptions:

\begin{assumption}
\label{assumption:3} $\Delta_n$ is integrable, where
$$
\Delta_n=\xi_n^{-1}\sup_{\lambda\in\Lambda}\big|\{f_{\lambda}(Y_0,\hat{w}_n(\lambda))-Y_0\}^2-\{f_{\lambda}(Y_0,w^*_n(\lambda))-Y_0\}^2\big|.
$$
\end{assumption}

\begin{assumption}
\label{assumption:4} For any $\lambda\in\Lambda$, $\xi_n^{-1}n^{-1/2} = o(1)$.
\end{assumption}

\begin{assumption}
\label{assumption:5} For any sample $(X,Y)\in S_{val}$ and any $\lambda\in\Lambda$ that (i) $\operatorname{var}[f_\lambda(X,w^*(\lambda))\{Y-E(Y|X)\}] =O(1)$; (ii) $\operatorname{var}[\{f_{\lambda}(X,w^*(\lambda))-E(Y|X)\}^2] = O(1)$.
\end{assumption}
As shown in (\ref{eq:14}) in the proof of Theorem \ref{theorem:1} that $\Delta_n$ is $o_p(1)$, Assumption \ref{assumption:3} is only imposed to ensure the expectation of $\Delta_n$ is o(1). Assumption \ref{assumption:4} is similar to Condition (11) in \citet{ando2014model}, Condition (C.3) of \citet{zhang2016optimal}, and Assumption 2.3 in \citet{Liu2013heteroscedasticity}, which makes a restriction on the minimum risk $\xi_n$, requiring the speed of its converging to zero to be not too fast---at least slower than $n^{-1/2}$. 
As a concrete example, Assumption \ref{assumption:4} is satisfied if $c\leq\xi_n$ for any $n$ for some constant $c$. Such cases arise when the hypothesis space, defined by any hyperparameter $\lambda\in\Lambda$, is insufficiently capable of approximating the objective function. Conversely, Assumption \ref{assumption:4} is violated if $\xi_n\equiv0$ for any $n\geq0 $. See interpretation from the perspective of error decomposition in Section \ref{sec:3.2}.Assumption \ref{assumption:5} has regularity conditions ensuring certain terms are not heavy-tailed so that their variance can be bounded. Now we are ready to present our first theorem. 

\begin{theorem}
\label{theorem:1}
Under Assumptions \ref{assumption:1}-\ref{assumption:5}, we have
\begin{align}
\label{eq:theorem1}
    \frac{R_0(\hat{\lambda})}{\inf_{\lambda\in\Lambda} R_0(\lambda)}  = 1+ O_p(\xi_n^{-1}n^{-1/2}) = 1 + o_p(1).
\end{align}
\end{theorem}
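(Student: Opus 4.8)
The plan is to follow the classical asymptotic-optimality argument for selection criteria (\citet{li1987asymptotic,shao1997asymptotic}), while exploiting that $|\Lambda|<\infty$ reduces the usual hard step---uniform control over the model space---to taking a maximum over finitely many $\lambda$, under which $o_p(1)$ and $O_p(n^{-1/2})$ bounds are preserved. Writing $\varepsilon_i=Y_i-E(Y_i|X_i)$ and $\tilde L_n(\lambda)=L_{n_2}(\lambda)-n^{-1}\sum_{i=1}^n\varepsilon_i^2$, the subtracted term is free of $\lambda$, so $\hat\lambda$ also minimizes $\tilde L_n$. The target is the sufficient condition
\begin{align}
\sup_{\lambda\in\Lambda}\big|\tilde L_n(\lambda)-R_0(\lambda)\big|=O_p(n^{-1/2}),\notag
\end{align}
from which the theorem follows by a one-line comparison of minimizers.

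First I would expand the square using $Y_i=E(Y_i|X_i)+\varepsilon_i$ to get the exact identity
\begin{align}
\tilde L_n(\lambda)=\frac1n\sum_{i=1}^n\{f_\lambda(X_i,\hat w_n(\lambda))-E(Y_i|X_i)\}^2-\frac2n\sum_{i=1}^n\varepsilon_i\{f_\lambda(X_i,\hat w_n(\lambda))-E(Y_i|X_i)\}.\notag
\end{align}
The key device is to replace the data-dependent $\hat w_n(\lambda)$ by the deterministic limit $w^*(\lambda)$ throughout, yielding starred analogues. Because $w^*(\lambda)$ is non-random and the validation fold is independent of $S_{train}$, these starred quantities are honest i.i.d.\ averages: the starred empirical risk has mean $R_0^*(\lambda)$ and, by Assumption \ref{assumption:5}(ii), summand variance $O(1)$, hence deviates from $R_0^*(\lambda)$ by $O_p(n^{-1/2})$; the starred cross term has conditional mean zero since $E[\varepsilon_i|X_i]=0$ and, by Assumption \ref{assumption:5}(i), variance $O(1)$, hence is $O_p(n^{-1/2})$.

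The real work is the substitution error, which is exactly what $\Delta_n$ quantifies. Linearizing via the mean-value theorem, the gradient bound of Assumption \ref{assumption:2}(ii), the boundedness of Assumption \ref{assumption:2}(i), and $\|\hat w_n(\lambda)-w^*(\lambda)\|=O_p(n^{-1/2})$ from Assumption \ref{assumption:1} together make each per-sample loss difference $O_p(n^{-1/2})$; the maximum over the finite set $\Lambda$ keeps this order, so that $\Delta_n=\xi_n^{-1}O_p(n^{-1/2})=o_p(1)$ by Assumption \ref{assumption:4}. Assumption \ref{assumption:3} (integrability) is then used to upgrade $\Delta_n=o_p(1)$ to $E(\Delta_n)=o(1)$, which I need to transfer between $R_0$ and $R_0^*$; in particular $\sup_\lambda|R_0(\lambda)-R_0^*(\lambda)|=o(\xi_n)$ yields both $\inf_\lambda R_0(\lambda)=\xi_n(1+o(1))$ and the lower bound $R_0(\lambda)\gtrsim\xi_n$ needed to pass from the $\xi_n$-normalization to pointwise relative error.

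Collecting the pieces gives $\sup_\lambda|\tilde L_n(\lambda)-R_0(\lambda)|=O_p(n^{-1/2})$. Letting $\lambda_0=\operatorname{argmin}_{\lambda\in\Lambda}R_0(\lambda)$ and using that $\hat\lambda$ minimizes $\tilde L_n$,
\begin{align}
R_0(\hat\lambda)=\tilde L_n(\hat\lambda)+O_p(n^{-1/2})\le\tilde L_n(\lambda_0)+O_p(n^{-1/2})=R_0(\lambda_0)+O_p(n^{-1/2}),\notag
\end{align}
so dividing by $\inf_\lambda R_0(\lambda)=\xi_n(1+o(1))$ gives $R_0(\hat\lambda)/\inf_\lambda R_0(\lambda)\le 1+O_p(\xi_n^{-1}n^{-1/2})$, while $\ge 1$ is automatic, and Assumption \ref{assumption:4} turns the remainder into $o_p(1)$. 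I expect the main obstacle to be the uniform substitution error $\hat w_n(\lambda)\mapsto w^*(\lambda)$ inside the squared-loss and cross terms: $\hat w_n(\lambda)$ is independent of the validation fold yet itself random, so its appearance must be linearized and the resulting gradient controlled in expectation uniformly over $\lambda$---precisely the role of Assumptions \ref{assumption:2} and \ref{assumption:3}. Everything else collapses, thanks to $|\Lambda|<\infty$, to finitely many elementary $O_p(n^{-1/2})$ central-limit-type bounds.
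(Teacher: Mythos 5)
Your proposal is correct in substance and its core estimates coincide with the paper's: the same $\lambda$-free re-centering of the validation loss, the same mean-value-theorem linearization combining Assumptions \ref{assumption:1} and \ref{assumption:2} to make the substitution $\hat{w}_n(\lambda)\mapsto w^*(\lambda)$ cost $O_p(n^{-1/2})$ uniformly over the finite $\Lambda$, the same use of Assumption \ref{assumption:3} plus Jensen to transfer this to $\sup_\lambda|R_0(\lambda)-R_0^*(\lambda)|=O(n^{-1/2})=o(\xi_n)$, and the same Chebyshev bounds from Assumption \ref{assumption:5} for the starred empirical averages. The one genuinely different step is the endgame: the paper routes the conclusion through its generic Lemma \ref{lemma:1} (stated for possibly diverging $\Lambda$, with an infimum-approximating sequence $\vartheta_n(\lambda(n))$ and the additional requirement (\ref{eq:lem3}) that $\inf_\lambda B_n(\lambda)\geq\kappa>0$ eventually), whereas you exploit $|\Lambda|<\infty$ to compare minimizers directly ($R_0(\hat\lambda)\leq \tilde{L}_n(\lambda_0)+O_p(n^{-1/2})=R_0(\lambda_0)+O_p(n^{-1/2})$) and then normalize by $\inf_\lambda R_0(\lambda)=\xi_n(1+o(1))$. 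Your version is shorter, and it has the added merit of never needing $R_0^*$ bounded away from zero: condition (\ref{eq:lem3}) of Lemma \ref{lemma:1} sits somewhat awkwardly with Assumption \ref{assumption:4}, which deliberately allows $\xi_n\to 0$, while your direct normalization by $\xi_n$ matches Assumption \ref{assumption:4} exactly. The paper's lemma, in exchange, is reusable (it is invoked again for Theorem \ref{theorem:2}) and covers non-attained infima.

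One small repair is needed in your centering. You subtract $n^{-1}\sum_{i}\varepsilon_i^2$ with $\varepsilon_i=Y_i-E(Y_i|X_i)$, so your cross term is $n^{-1}\sum_i \varepsilon_i\{f_\lambda(X_i,w^*(\lambda))-E(Y_i|X_i)\}$, whose $\lambda$-free component $n^{-1}\sum_i \varepsilon_i E(Y_i|X_i)$ has variance $E\{\varepsilon_1^2 E(Y_1|X_1)^2\}$, which no stated assumption controls: Assumption \ref{assumption:5}(i) bounds $\operatorname{var}[f_\lambda(X,w^*(\lambda))\{Y-E(Y|X)\}]$, not $\operatorname{var}[\{f_\lambda(X,w^*(\lambda))-E(Y|X)\}\{Y-E(Y|X)\}]$. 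The fix is to subtract instead $n^{-1}\sum_i\{Y_i-E(Y_i|X_i)\}\{Y_i+E(Y_i|X_i)\}$, which is the paper's choice in defining $L_n^*$; the cross term then becomes exactly $n^{-1}\sum_i f_\lambda(X_i,w^*(\lambda))\varepsilon_i$, to which Assumption \ref{assumption:5}(i) applies verbatim, and every other step of your argument goes through unchanged.
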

Theorem \ref{theorem:1} indicates that the selected hyperparameter $\hat{\lambda}$ is optimal in the sense that the prediction risk of the neural network using hyperparameter $\hat{\lambda}$ is asymptotically identical to the risk of the infeasible best neural network. Note that in (\ref{eq:theorem1}), since $\hat{\lambda}$ is directly plugged in the expressions $R_0(\lambda)$, the randomness of $\hat{\lambda}$ is not considered. This plug-in is done after taking the expectation in $R_0(\hat{\lambda})$, which makes (\ref{eq:theorem1}) independent of test size $n_0$, and hence Theorem \ref{theorem:1} holds for arbitrary $n_0$. In what follows, our second result shows that we can build other forms of optimalities based on $L_0(\lambda)$ with some additional assumptions. 

\begin{assumption}
\label{assumption:6} For any $\lambda\in\Lambda$, $\xi_n^{-1}n_0^{-1/2} = o(1)$.
\end{assumption}
Assumption \ref{assumption:6} is similar to Assumption \ref{assumption:4} except that the large sample condition is also with respect to $n_0$. If we assume that $n_0$ and $n$ are of same order, Assumption \ref{assumption:4} and Assumption \ref{assumption:6} are actually equivalent. As $L_0(\lambda)$ denotes the mean prediction loss on the test set $S_0$, this measure can be affected by the random fluctuations of samples. Therefore, it is required that the size of the test set, $n_0$, to be divergent, as assumed in \ref{assumption:6}.

\begin{assumption} 
    \label{assumption:7}
    (i) There exists a constant $C$ such that $L_0(\hat{\lambda})-\inf_{\lambda \in \Lambda} L_0(\lambda)<C$; (ii) $\{L_0(\hat{\lambda})-\inf _{\lambda \in \Lambda} L_0(\lambda)\}/\inf _{\lambda \in \Lambda} L_0(\lambda)$ is uniformly integrable.
\end{assumption}
Assumption \ref{assumption:7}(i) holds when $f_{\lambda}(X,w^*(\lambda))$ and $E(Y|X)$ are bounded. Assumption \ref{assumption:7}(ii) is imposed to obtain the $L_1$ convergence of the term on the left hand side of (\ref{eq:7}) in Theorem \ref{theorem:2}. 
\begin{theorem}
\label{theorem:2}
For a diverging $n_0$, under Assumptions \ref{assumption:1}-\ref{assumption:6}, we have
\begin{align}
    \label{eq:7}
    \frac{L_0(\hat{\lambda})}{\inf_{\lambda\in\Lambda} L_0(\lambda)}= 1+O_p(\xi_n^{-1}n^{-1/2}+\xi_n^{-1}n_0^{-1/2}) = 1 + o_p(1).
\end{align}
Furthermore, if Assumption \ref{assumption:7} holds, then we have
\begin{align} 
    \label{eq:ad1}
    E\Big\{\frac{L_0(\hat{\lambda})}{\inf _{\lambda \in \Lambda} L_0(\lambda)}\Big\}=1+o(1) 
\end{align}
and
\begin{align} \label{eq:ad2}
    \frac{E\{L_0(\hat{\lambda})\}}{E\{\inf _{\lambda \in \Lambda} L_0(\lambda)\}}=1+o(1). 
\end{align}
\end{theorem}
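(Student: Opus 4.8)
The plan is to use Theorem~\ref{theorem:1} as a black box governing the \emph{selection} of $\hat\lambda$ and to graft onto it the only genuinely new effect in (\ref{eq:7}), namely the Monte Carlo fluctuation of the empirical test loss $L_0(\lambda)$ about the risk $R_0(\lambda)$, a fluctuation controlled by $n_0$ rather than $n$. Normalizing every discrepancy by $\xi_n=\inf_{\lambda\in\Lambda}R_0^*(\lambda)$, I would first reduce (\ref{eq:7}) to the single uniform estimate $\sup_{\lambda\in\Lambda}\xi_n^{-1}\,|L_0(\lambda)-R_0^*(\lambda)|=o_p(1)$. I would split the summand as $|L_0(\lambda)-L_0^*(\lambda)|+|L_0^*(\lambda)-R_0^*(\lambda)|$. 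The second piece is the sampling error of an average of $n_0$ i.i.d.\ terms with mean $R_0^*(\lambda)$; since $S_0$ shares the distribution of $S_{val}$, Assumption~\ref{assumption:5}(ii) bounds each term's variance, so Chebyshev gives $O_p(n_0^{-1/2})$ for fixed $\lambda$ and, as $|\Lambda|<\infty$, uniformly; dividing by $\xi_n$ and invoking Assumption~\ref{assumption:6} yields $o_p(1)$. The first piece is exactly the replacement of $\wnhat$ by $\wstar$ already controlled in the proof of Theorem~\ref{theorem:1} (the $\Delta_n$ bound together with (\ref{eq:assumption2})): a first-order expansion in $w$ under Assumptions~\ref{assumption:1}--\ref{assumption:2} makes it $O_p(n^{-1/2})$ uniformly, hence $o_p(1)$ after dividing by $\xi_n$ via Assumption~\ref{assumption:4}.

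Granting this uniform estimate, since $R_0^*(\lambda)\ge\xi_n$ it upgrades to $\sup_{\lambda}|L_0(\lambda)/R_0^*(\lambda)-1|=o_p(1)$, i.e.\ $L_0(\lambda)=R_0^*(\lambda)\{1+o_p(1)\}$ uniformly. Evaluating at the minimizer of $R_0^*$ and using monotonicity then gives $\inf_{\lambda}L_0(\lambda)=\xi_n\{1+o_p(1)\}$ for the denominator. For the numerator I would chain $L_0(\hat\lambda)=R_0^*(\hat\lambda)\{1+o_p(1)\}$ with Theorem~\ref{theorem:1}, which already forces $R_0(\hat\lambda)=\inf_\lambda R_0(\lambda)\{1+O_p(\xi_n^{-1}n^{-1/2})\}$, together with $R_0\approx R_0^*$ and $\inf_\lambda R_0(\lambda)\approx\xi_n$ (both up to $O(n^{-1/2})$), to conclude $L_0(\hat\lambda)=\xi_n\{1+o_p(1)\}$. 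Taking the ratio and bookkeeping the two rates produces the claimed $1+O_p(\xi_n^{-1}n^{-1/2}+\xi_n^{-1}n_0^{-1/2})$, establishing (\ref{eq:7}).

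The $L_1$ statement (\ref{eq:ad1}) is then immediate: the left-hand integrand minus one is precisely $\{L_0(\hat\lambda)-\inf_{\lambda}L_0(\lambda)\}/\inf_\lambda L_0(\lambda)$, which (\ref{eq:7}) shows converges to $0$ in probability and which Assumption~\ref{assumption:7}(ii) assumes to be uniformly integrable; Vitali's theorem converts convergence in probability plus uniform integrability into $L_1$ convergence, giving $E\{L_0(\hat\lambda)/\inf_\lambda L_0(\lambda)\}\to1$.

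The ratio of expectations (\ref{eq:ad2}) is where I expect the real work, since it does not follow from (\ref{eq:ad1}). My plan is to pin down both expectations at the scale $\xi_n$: for the denominator, Fatou's lemma applied to $\inf_\lambda L_0(\lambda)/\xi_n\to1$ gives the lower bound $E\{\inf_\lambda L_0(\lambda)\}\ge\xi_n\{1-o(1)\}$, while the elementary bound $E\{\inf_\lambda L_0(\lambda)\}\le\inf_\lambda E\{L_0(\lambda)\}=\inf_\lambda R_0(\lambda)=\xi_n\{1+o(1)\}$ gives the upper bound, so $E\{\inf_\lambda L_0(\lambda)\}=\xi_n\{1+o(1)\}$. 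The theorem then reduces to showing $E\{L_0(\hat\lambda)-\inf_\lambda L_0(\lambda)\}=o(\xi_n)$ for the nonnegative excess $D:=L_0(\hat\lambda)-\inf_\lambda L_0(\lambda)$. Writing $D=\inf_\lambda L_0(\lambda)\cdot r$ with $r$ the uniformly integrable ratio of Assumption~\ref{assumption:7}(ii) and splitting on $\{r\le\delta\}$ versus $\{r>\delta\}$, the bulk $E\{D;\,r\le\delta\}\le\delta\,E\{\inf_\lambda L_0(\lambda)\}=O(\delta\xi_n)$ is harmless, whereas the tail $E\{D;\,r>\delta\}$ must be shown to be $o(\xi_n)$ using the boundedness $D<C$ of Assumption~\ref{assumption:7}(i) together with an $L_2$-type concentration of $\inf_\lambda L_0(\lambda)/\xi_n$ (which itself follows from Assumptions~\ref{assumption:5} and \ref{assumption:6} after bounding $\inf_\lambda L_0$ by $L_0(\lambda_0)$ at a fixed $\lambda_0$). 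Converting the in-probability optimality of $\hat\lambda$ into this genuine $o(\xi_n)$ control of $E\{D\}$ at a scale $\xi_n$ that is permitted to vanish is the main obstacle, and it is exactly the role that the two parts of Assumption~\ref{assumption:7} are designed to play.
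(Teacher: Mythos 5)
For (\ref{eq:7}) and (\ref{eq:ad1}) your argument is correct and rests on the same ingredients as the paper: the decomposition $|L_0(\lambda)-L_0^*(\lambda)|+|L_0^*(\lambda)-R_0^*(\lambda)|$, Chebyshev with Assumption \ref{assumption:5}(ii) plus Assumption \ref{assumption:6} and finiteness of $\Lambda$ for the $n_0^{-1/2}$ piece, the mean-value expansion under Assumptions \ref{assumption:1}--\ref{assumption:2} with Assumption \ref{assumption:4} for the $n^{-1/2}$ piece, and uniform integrability (Vitali) for (\ref{eq:ad1}). The only structural difference is that the paper re-applies Lemma \ref{lemma:1} with $A_n(\lambda)=L_0(\lambda)$, $B_n(\lambda)=R_0^*(\lambda)$, $a_n(\lambda)=L_n^*(\lambda)-L_0(\lambda)$, whereas you chain $L_0\approx R_0^*\approx R_0$ through the conclusion of Theorem \ref{theorem:1} (and its intermediate estimate (\ref{eq:11})); both routes deliver the stated rate.

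The genuine gap is in (\ref{eq:ad2}), and you flag it yourself. Your program requires $E\{D\}=o(\xi_n)$ for $D=L_0(\hat\lambda)-\inf_{\lambda\in\Lambda}L_0(\lambda)$, and the tail term is exactly where it fails: Assumption \ref{assumption:7}(i) only gives $E\{D\,\mathbf{1}(r>\delta)\}\le C\Pr(r>\delta)=o(1)$, which is vacuous when $\xi_n\to0$, while uniform integrability of $r=D/\inf_{\lambda\in\Lambda}L_0(\lambda)$ controls $E\{r\,\mathbf{1}(r>\delta)\}$ but not $E\{r\cdot\inf_{\lambda\in\Lambda}L_0(\lambda)\,\mathbf{1}(r>\delta)\}$ at scale $\xi_n$, since $r$ and $\inf_{\lambda\in\Lambda}L_0(\lambda)$ need not decouple. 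So as written the proposal does not prove (\ref{eq:ad2}).

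The paper sidesteps the $\xi_n$ scale entirely: it never establishes $E\{D\}=o(\xi_n)$ or $E\{\inf_{\lambda\in\Lambda}L_0(\lambda)\}\asymp\xi_n$ (so your Fatou/upper-bound pinning of the denominator, though fine, is unnecessary). Instead, combining a Cauchy--Schwarz step with the bound $D<C$ from Assumption \ref{assumption:7}(i), it obtains (\ref{eq:ad7}),
\begin{align}
\frac{E\{D\}}{E\{\inf_{\lambda\in\Lambda}L_0(\lambda)\}}\ \le\ C\,E\{r\},\notag
\end{align}
that is, the ratio of expectations is bounded by the expectation of the ratio up to the constant $C$, and the right-hand side is $o(1)$ by the already-proved (\ref{eq:ad1})/(\ref{eq:ad5}). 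That comparison is the missing idea relative to the paper. For fairness, your instinct that something delicate happens when $\xi_n\to0$ is not baseless: a literal Cauchy--Schwarz yields $E\{D\}\le\{E(r)\}^{1/2}\{C\,E(\inf_{\lambda\in\Lambda}L_0(\lambda))\}^{1/2}$, with square roots, and recovering (\ref{eq:ad7}) from that form would again call for a lower bound on $E\{\inf_{\lambda\in\Lambda}L_0(\lambda)\}$ of the sort you were constructing; the paper states (\ref{eq:ad7}) without the roots. But the mechanism it uses---divide by $E\{\inf_{\lambda\in\Lambda}L_0(\lambda)\}$ first and invoke (\ref{eq:ad1})---is the step your outline lacks and the one that closes the theorem in the paper.
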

Theorem \ref{theorem:2} contains three types of optimalities. Equation (\ref{eq:7}) concerns the optimality in terms of asymptotically minimizing $L_0$, the mean prediction loss on test set $S_0$, which is established based on the result in Theorem \ref{theorem:1}. Equation (\ref{eq:ad1}) and (\ref{eq:ad2}) concern optimalities involving different forms of expectation.
It is important to note that $E\{L_0(\hat{\lambda})\}$ is distinct from $R_0(\hat{\lambda})$ presented in Theorem \ref{theorem:1}, since the randomness of $\hat{\lambda}$ is taken into consideration in (\ref{eq:ad1}) and (\ref{eq:ad2}) but not in $R_0(\hat{\lambda})$ . Consequently, the optimalities outlined in (\ref{eq:ad1}) and (\ref{eq:ad2}) is more reflective of real-world scenarios compared to Theorem \ref{theorem:1}, thereby offering theoretical evidence in practical application. To verify the justifications in Theorem \ref{theorem:2}, we will carry out a series of experimental studies in Section \ref{sec:4}. 

\subsection{Error Decomposition Interpretation} \label{sec:3.2}
For a more precise comprehension of theory, we revisit the objective in this paper from the perspective of error decomposition in machine learning theory. The optimalities of $\hat{\lambda}$ in Theorem \ref{theorem:1} and Theorem \ref{theorem:2} are obtained in the sense of minimizing $L_0(\lambda)$ or $R_0(\lambda)$, which both involve the difference between the target $E(Y|X)$ and the trained model $f_{\lambda}(X,\hat{w}_n(\lambda))$. This discrepancy can be written as 
\begin{align}
     & E(Y|X)-f_{\lambda}(X,\hat{w}_n(\lambda)) \notag\\ 
    =& \{E(Y|X)-f_{\lambda}(X,w^*(\lambda))\} + \{f_{\lambda}(X,w^*(\lambda))-f_{\lambda}(X,\hat{w}_n(\lambda))\}. \notag
\end{align}
Now we preceive $f_{\lambda}(X,w^*(\lambda))$ as a quasi-optimal estimate of $E(Y|X)$ within models defined by hyperparamter $\lambda$, and then we can interpret the two terms respectively as 1) the model error $\{E(Y|X)-f_{\lambda}(X,w^*(\lambda))\}$ (also referred to as approximation error)---the error caused by the inadequate hypothesis space of the model determined by the choice of $\lambda$; and 2) the estimation error $\{f_{\lambda}(X,w^*(\lambda))-f_{\lambda}(X,\hat{w}_n(\lambda))\}$---the error resulting from the gap, given $\lambda$, between the quasi-optimal parameter $w^*(\lambda)$ and the parameter $\hat{w}_n(\lambda)$ derived from only $n$ training samples. If we revisit the definition of $L_0^*(\lambda)$, the model error $\{E(Y|X)-f_{\lambda}(X,w^*(\lambda))\}$ corresponds to the term within $L_0^*(\lambda)$'s square. Therefore, any assumptions imposed on $L_0^*(\lambda)$ or $R_0^*(\lambda)$ (such as Assumption \ref{assumption:4}, \ref{assumption:6}, and \ref{assumption:7}) inherently carry assumptions about the model error. Assumption \ref{assumption:1} imports a condition ensuring the convergence of $\hat{w}_n(\lambda)$ to $w^*(\lambda)$ at a certain rate. We leverage this condition to manage the estimation error $\{f_{\lambda}(X,w^*(\lambda))-f_{\lambda}(X,\hat{w}_n(\lambda))\}$. This indicates that our work contemplates both model error and estimation error from the perspective of error decomposition. The optimalities established in our theorems can be viewed as accounting for the effectiveness of sample splitting with the modeling of both errors. 

Note that the optimization error---the error incurred during training process---is not our main focus, which is consistent with previous work that study neural networks from a statistical perspective \citep[e.g.,][]{white1989learning,barron1994approximation,benedikt2019deep}. By utilizing Assumption \ref{assumption:1}, we have addressed the estimation error, as mentioned in the previous paragraph, while concurrently avoided the explicit modeling of the optimization error. By invoking the convergence property of $\hat{w}_n(\lambda)$ under Assumption \ref{assumption:1}, we directly perceive $\hat{w}_n(\lambda)$ as a set of satisfactory network parameters obtained under $n$ training samples, thereby dispensing with the need to model the specific training process of obtaining them. The specific modeling of these elements---the training process and optimization error---might well serve as a avenue for future research.

\section{Experimental Verification} \label{sec:4}
In this section, we consider several common neural network structures and use them to model a range of diverse problems, including linear/nonlinear regression, binary-classification, and time series forecasting, to verify our theoretical results in Theorem \ref{theorem:2}.

\begin{table}[htbp]
    \centering
    \begin{tabular}{@{}llcccc@{}}
        \toprule
        & Scenario & LR & HS & Depth & BS \\
        \midrule
        \multirow{3}{*}{MLP} & Linear & 0.1, 0.01, 0.001 & 5, 10, 20 & 1, 2 & 8, 16, 32 \\
        & Nonlinear & 0.01, 0.001 & 50, 100 & 1, 2, 3 & 16, 32, 64 \\
        & Classification & 0.001, 0.0001 & 50, 100 & 1, 3, 5 & 16, 32, 128 \\
        \bottomrule
    \end{tabular}
    \caption{Design of $\Lambda$ for experiments with MLPs}\label{tab:1}
\end{table}

\subsection{Multilayer Perceptrons}
We conduct experiments in three scenarios based on a basic NN structure multilayer perceptrons (MLPs), which follows a fully connected structure. The first and second scenarios are respectively based on linear and nonlinear regression problems. The third scenario simulates a classification problem. The training and validation set sizes are set to $n_1=n_2=n$, with $n$ gradually increasing from $50$ to $50000$, while the test set size $n_0=n/5$. The set of hyperparameters, denoted by $\Lambda$, consists of four types varying hyperparameters: LR (learning rate), HS (number of hidden nodes in each layer), depth (number of hidden layers), and BS (batch size), with the alternative ranges of these hyperparameters in different scenarios shown in Table \ref{tab:1}. All other hyperparameters are fixed as follows. The maximum epoch was set to 50, the activation function was the Relu function, and the optimizer used was Adam. The Mean Squared Error (MSE) loss function is used for training regression models, while the commonly used Cross Entropy (CE) loss function is used for training classification models. Now we will present the specific designs and results for each scenario.

\subsubsection{Linear Regression} \label{sec:4.1.1}
For the first scenario in our simulations, we consider a linear relationship between $X$ and $Y$. 
\begin{align}
    Y_i = X_i^\top \beta + \epsilon_i= \sum_{l=1}^5 \beta_lX_{il}+\epsilon_i, \notag \quad i=1,...,n,
\end{align}
where $\epsilon_i\sim N(0,\sigma^2)$, $X_i=(X_{i1},...,X_{i5})^\top\sim N(\mathbf{0},\Omega)$, with the diagonal elements of $\Omega$ set to 1 and off-diagonal elements set to 0.5. We set the coefficient vector $\beta=(\beta_1,...,\beta_5)^{\top}$ as $\beta_l=1$ for $1\leq l\leq3$ and $\beta_l=-1$ for $4\leq l\leq5$. The variance of the noise, denoted by $\sigma^2$, is set to vary between $\sigma^2=1/3,1,3$, corresponding to $R^2=0.9,0.75,0.5$, respectively. The hyperparameters were set according to the `Linear' row in Table \ref{tab:1}.

For each value of $n=50,...,50000$, we repeat the same simulation for 50 times. In each simulation replication, we train a number of neural networks with every hyperparameter choice $\lambda \in \Lambda$. After the training process is completed, we select the best hyperparameters by $\hat{\lambda} = \operatorname{argmin}_{\lambda\in\Lambda}\ L_n(\lambda)$, record $L_0(\lambda)$ for every $\lambda \in \Lambda$, and then calculate $\inf_{\lambda\in\Lambda}L_0(\lambda)$. Note that $E(Y|X)=X^\top \beta$ when calculating $L_0(\lambda)$ here. To verify (\ref{eq:ad1}), we calculate the ratio $L_0(\hat{\lambda})/\inf_{\lambda\in\Lambda}L_0(\lambda)$ in all replications. Figure \ref{fig:1a} presents the average value of this ratio across all replications to represent $E\{L_0(\hat{\lambda})/\inf_{\lambda\in\Lambda}L_0(\lambda)\}$. To verify (\ref{eq:ad2}), we separately calculate the average values of $L_0(\hat{\lambda)}$ and $\inf_{\lambda\in\Lambda}L_0(\lambda)$ separately across all replications and then divide these two values to represent the ratio $E\{L_0(\hat{\lambda)}\}/E\{\inf_{\lambda\in\Lambda}L_0(\lambda)\}$. Figure \ref{fig:1b} presents this ratio under different settings. The results presented in Figure \ref{fig:1} demonstrate that both $E\{L_0(\hat{\lambda})/\inf_{\lambda\in\Lambda}L_0(\lambda)\}$ and $E\{L_0(\hat{\lambda)}\}/E\{\inf_{\lambda\in\Lambda}L_0(\lambda)\}$ converge to 1 regardless of the choice of $\sigma^2$, manifesting the validity of Theorem \ref{theorem:2} in this scenario.
\begin{figure}[htbp]
    \centering
    \subfigure[]{
    \label{fig:1a}
    \includegraphics[width=0.47\linewidth]{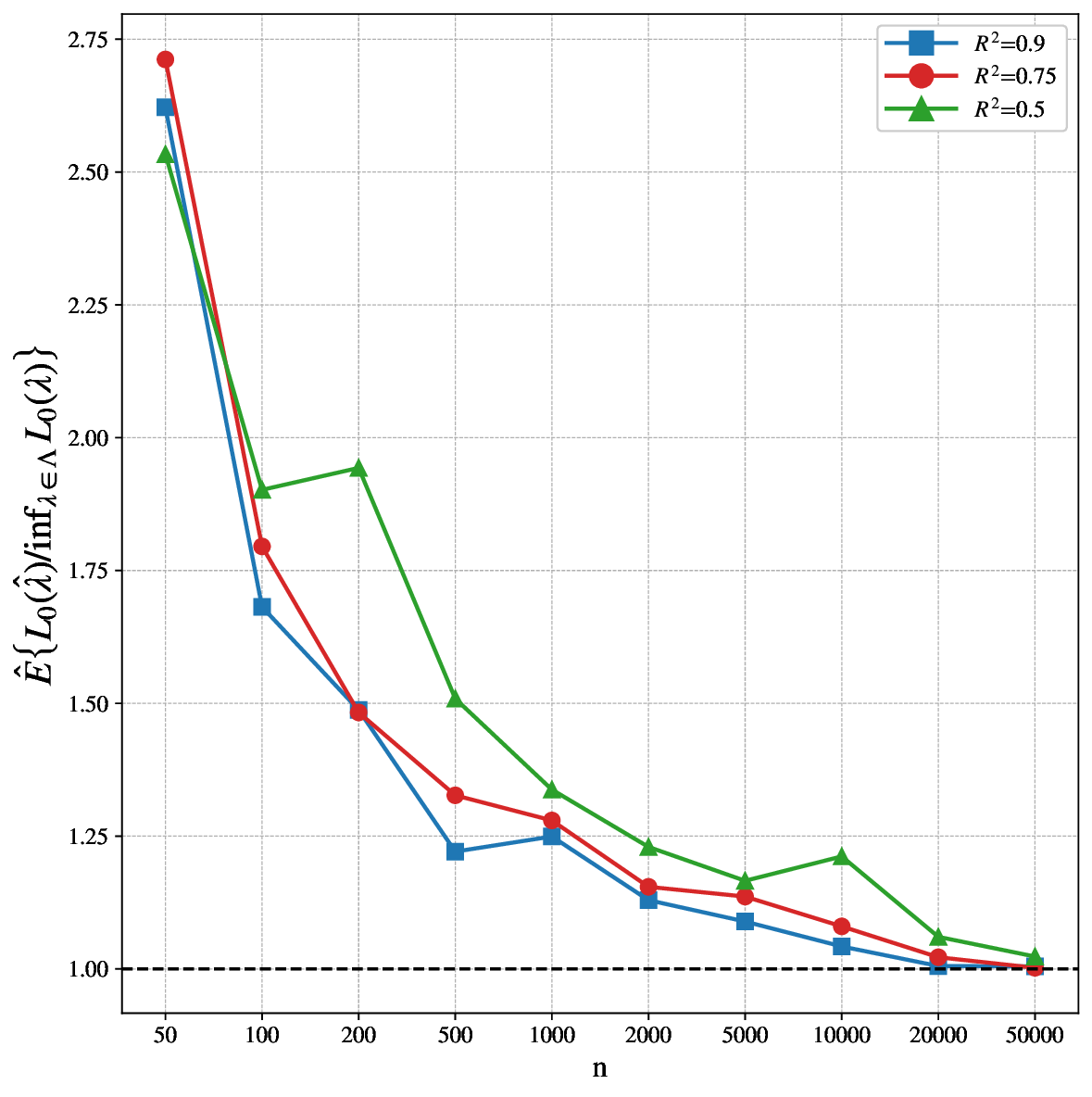}}
    \subfigure[]{
    \label{fig:1b}
    \includegraphics[width=0.47\linewidth]{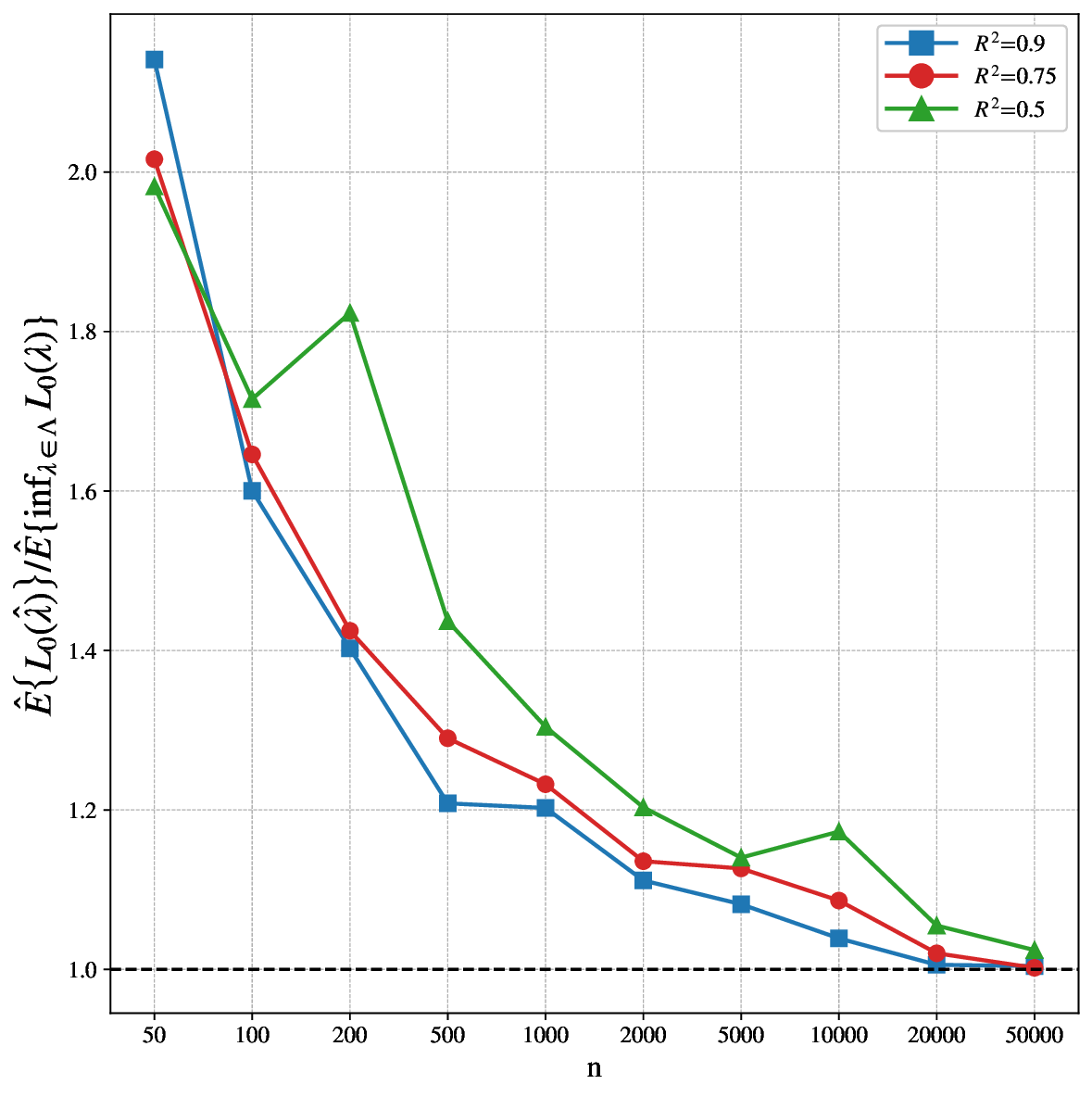}}
    \caption{Results of MLPs for linear regression. (a) The estimated value of $E\{L_0(\hat{\lambda})/\inf_{\lambda\in\Lambda}L_0(\lambda)\}$. (b) The estimated value of $E\{L_0(\hat{\lambda)}\}/E\{\inf_{\lambda\in\Lambda}L_0(\lambda)\}$. }
    \label{fig:1}
\end{figure}


\subsubsection{Nonlinear Regression} \label{sec:4.1.2}
For the second simulation design, we consider a non linear regression problem of the form:
\begin{align}
    Y_i = \frac{10\sin{(\|X_i\|_2})}{\|X_i\|_2} + \epsilon_i,\quad i=1,...,n, \notag
\end{align}
where $\epsilon_i\sim N(0,\sigma^2)$ and ${X}_i=(X_{i1},...,X_{i5})^{\top}$. For $l=1,...,5$, $X_{il}$ is independent of each other and follows a uniform distribution between $-10$ and $10$. An additional simulation was conducted to approximate the variance of $\sin{(|X_i|_2)}/|X_i|_2$.
Accordingly, the choice of $\sigma$ are set to ensure $R^2=0.9, 0.75, 0.5$, respectively. The varying hyperparameters are set according to the `Nonlinear' row in Table \ref{tab:1}, and the training and analysis process is conducted in the same manner as that in our linear regression scenario. Note that $E(Y|X)=10\sin{(\|X\|_2})/\|X\|_2$ when calculating $L_0(\lambda)$ here. 

Figure \ref{fig:2a} and \ref{fig:2b} show that $E\{L_0(\hat{\lambda})/\inf_{\lambda\in\Lambda}L_0(\lambda)\}$ and $E\{L_0(\hat{\lambda)}\}/E\{\inf_{\lambda\in\Lambda}L_0(\lambda)\}$ converge to 1 in nonlinear settings as well.
\begin{figure}[htbp]
    \centering
    \subfigure[]{
    \label{fig:2a}
    \includegraphics[width=0.47\linewidth]{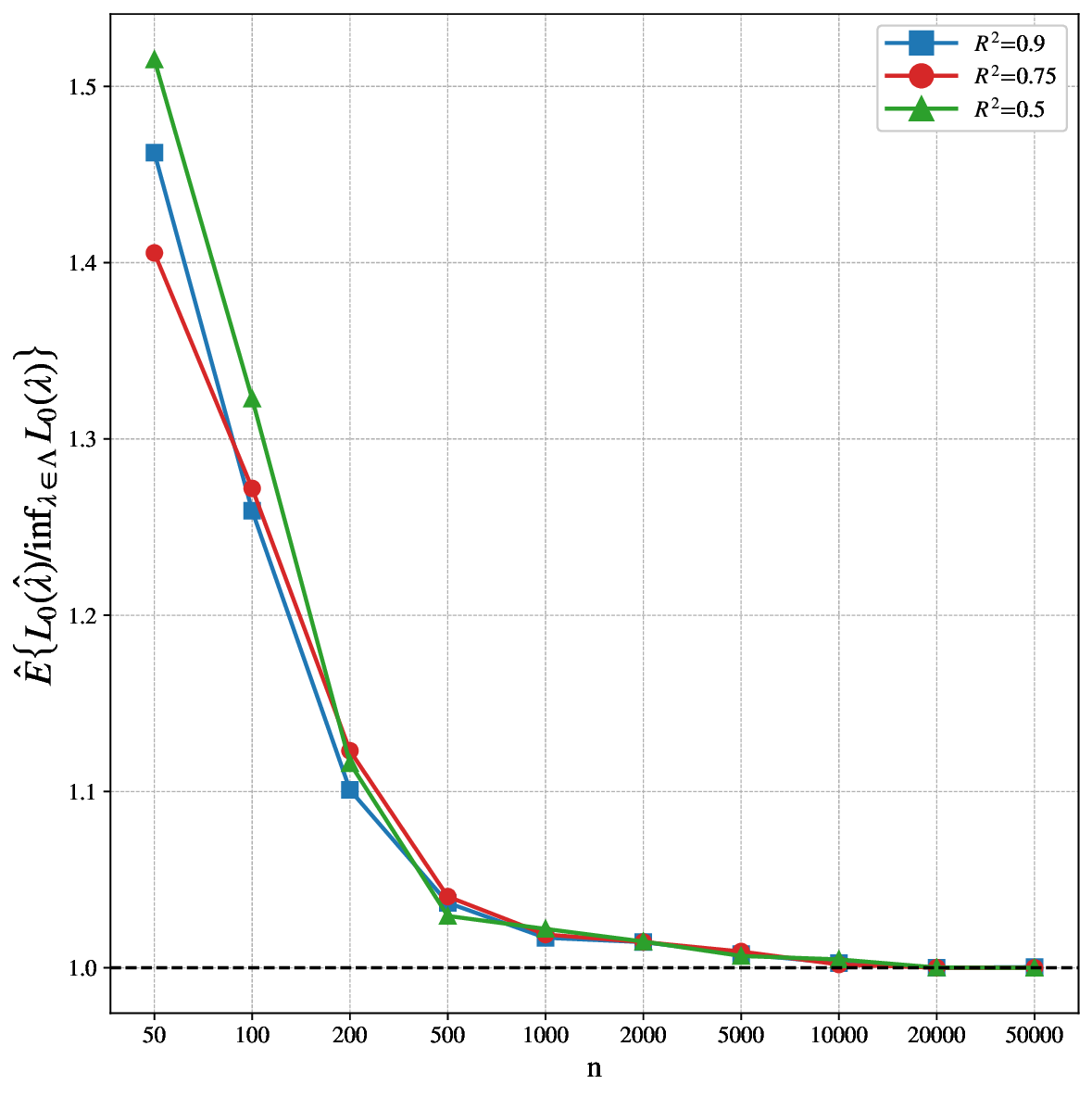}}
    \subfigure[]{
    \label{fig:2b}
    \includegraphics[width=0.47\linewidth]{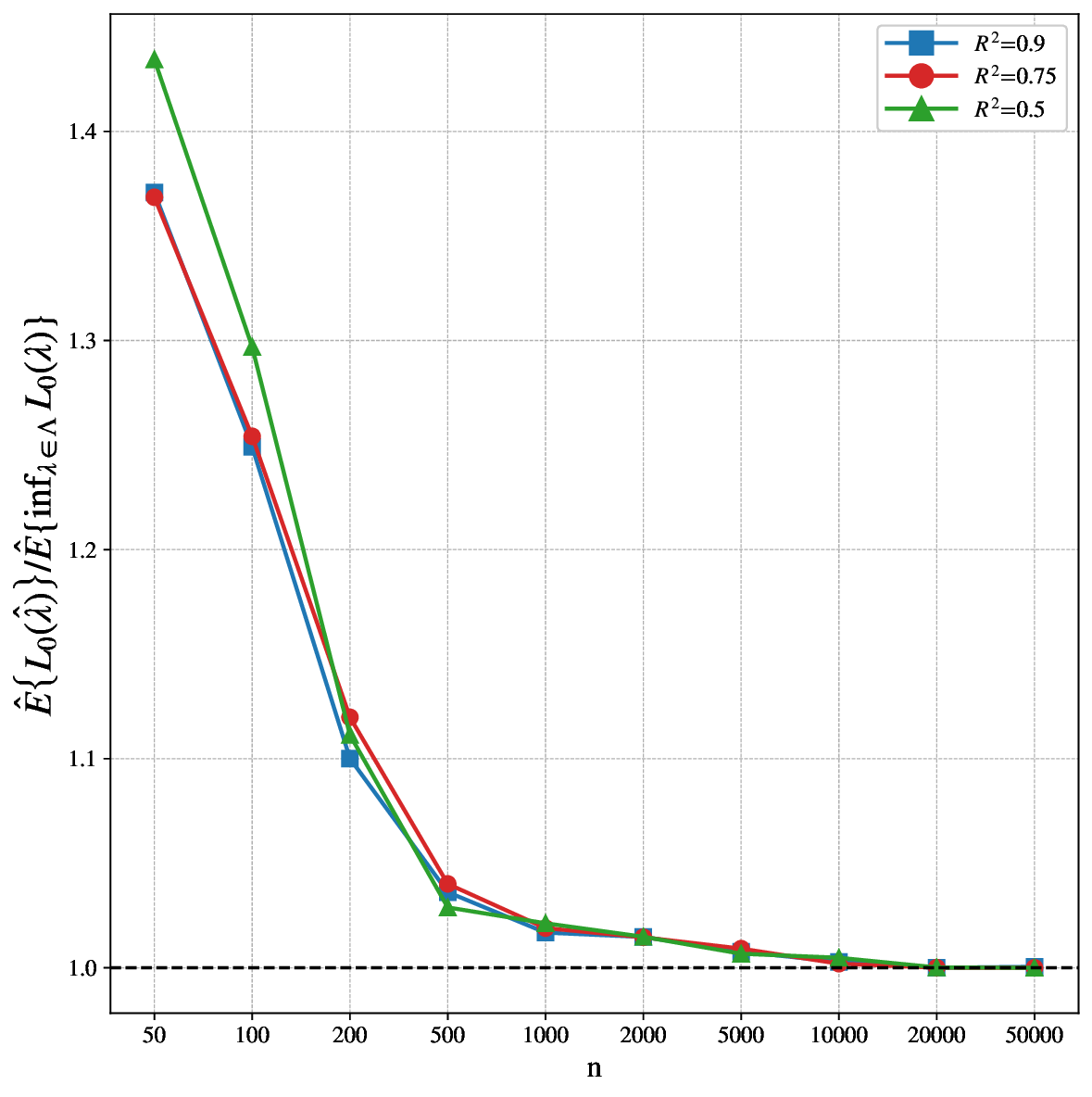}}
    \caption{Results of MLPs for nonlinear regression. (a) The estimated value of $E\{L_0(\hat{\lambda})/\inf_{\lambda\in\Lambda}L_0(\lambda)\}$. (b) The estimated value of $E\{L_0(\hat{\lambda)}\}/E\{\inf_{\lambda\in\Lambda}L_0(\lambda)\}$.}
    \label{fig:2}
\end{figure}

\subsubsection{Classification} \label{sec:4.1.3}
For the third simulation design, we address a binary classification problem. Specifically, we assume that $Y_i$ takes values of 0 or 1, and the conditional probability of $Y_i$ being 1 given $X_i$, denoted as $\operatorname{P}(Y_i=1|X_i)$, is given by $g(X_i)$:
\begin{align}
    Y_i = 0, 1, \quad \operatorname{P}(Y_i=1|X_i) = g(X_i)\notag,\quad i =1,...,n.
\end{align}
Here $g(\mathbf{x})=1/[1+\exp\{-(h(\mathbf{x})-\mu)^2\}]$, $\mathbf{x}=(x_1,...,x_{10})^\top\in R^{10}$, where $h(\mathbf{x})$ is a nonlinear function of $\mathbf{x}$ defined as 
$$
\begin{aligned}
h(\mathbf{x}) &=12 x_{1}(x_{2}-0.5)^{2}-16\{x_{3}(x_{5}-0.2)\}^{4} \\
&+2 \log \{3-4\left(x_{4}-0.3\right)^{2}+x_{5}+\exp (-x_{6} x_{7}+x_{5})\} \\
&+2 \tan [4\{(x_{1}(x_{8}-0.5)\}^{2}+0.1]
\end{aligned}
$$
with $x_{9}$ and $x_{10}$ being redundant. We generate the an sample input as $X_i = (X_{i1},...,X_{i10})^{\top}$, where $X_{il}$ are independent and uniformly distributed between 0 and 1 for $l=1,...,10$. We set $\mu$ to ensure that $E(h(X_i))=\mu$, and $\Lambda$'s setting is shown in `Classification' row in Table \ref{tab:1}. The activation function is Relu, and we use a sigmoid function before the output layer to limit the range of output to interval $(0,1)$.  

The loss function for training is cross-entropy (CE). However, for consistency with our theory, the determination of $\hat{\lambda}$ through the validation set and subsequent verification of the theorem are based on mean squared error (MSE) instead of CE. Specifically, $\hat{\lambda}$ is still chosen according to (\ref{eq:hat_lambda}), i.e., by minimizing the MSE between real-valued $f_{\lambda}(X,\hat{w}_n(\lambda))$ and one-hot value $Y$. In terms of computing $L_0(\lambda)$, since we have $E(Y|X)=P(Y=1|X)=g(X)$, we calculate the square error of real-valued $f_{\lambda}(X,\hat{w}_n(\lambda))$ and real-valued $g(X)$. 
Although this procedure leads to inconsistency between the loss function for training and the performance measurement, we state that the theoretical results are not affected as the properties that form during the training process are related to Assumption \ref{assumption:1} and will not affect subsequent reasoning.
Therefore, the optimality of $\hat{\lambda}$ still holds, and the simulation results also demonstrate this statement. We record the data in replicate simulations to approximate $E\{L_0(\hat{\lambda})/\inf_{\lambda\in\Lambda}L_0(\lambda)\}$ and $E\{L_0(\hat{\lambda)}\}/E\{\inf_{\lambda\in\Lambda}L_0(\lambda)\}$. Figure \ref{fig:3} shows these ratios also converge to 1 in this scenario, which is consistent with Theorem \ref{theorem:2}.

\begin{figure}[htbp]
    \centering
    \subfigure[]{
    \label{fig:3a}
    \includegraphics[width=0.47\linewidth]{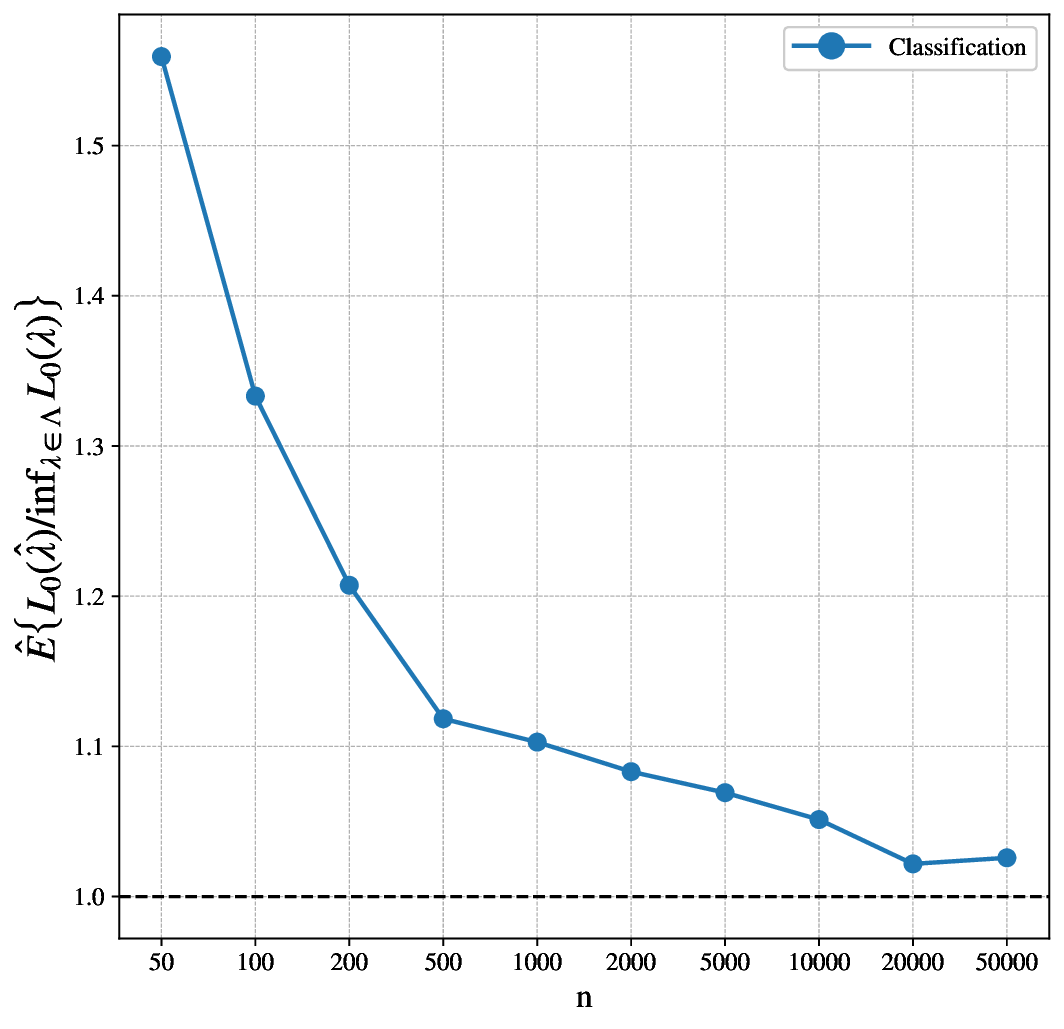}}
    \subfigure[]{
    \label{fig:3b}
    \includegraphics[width=0.47\linewidth]{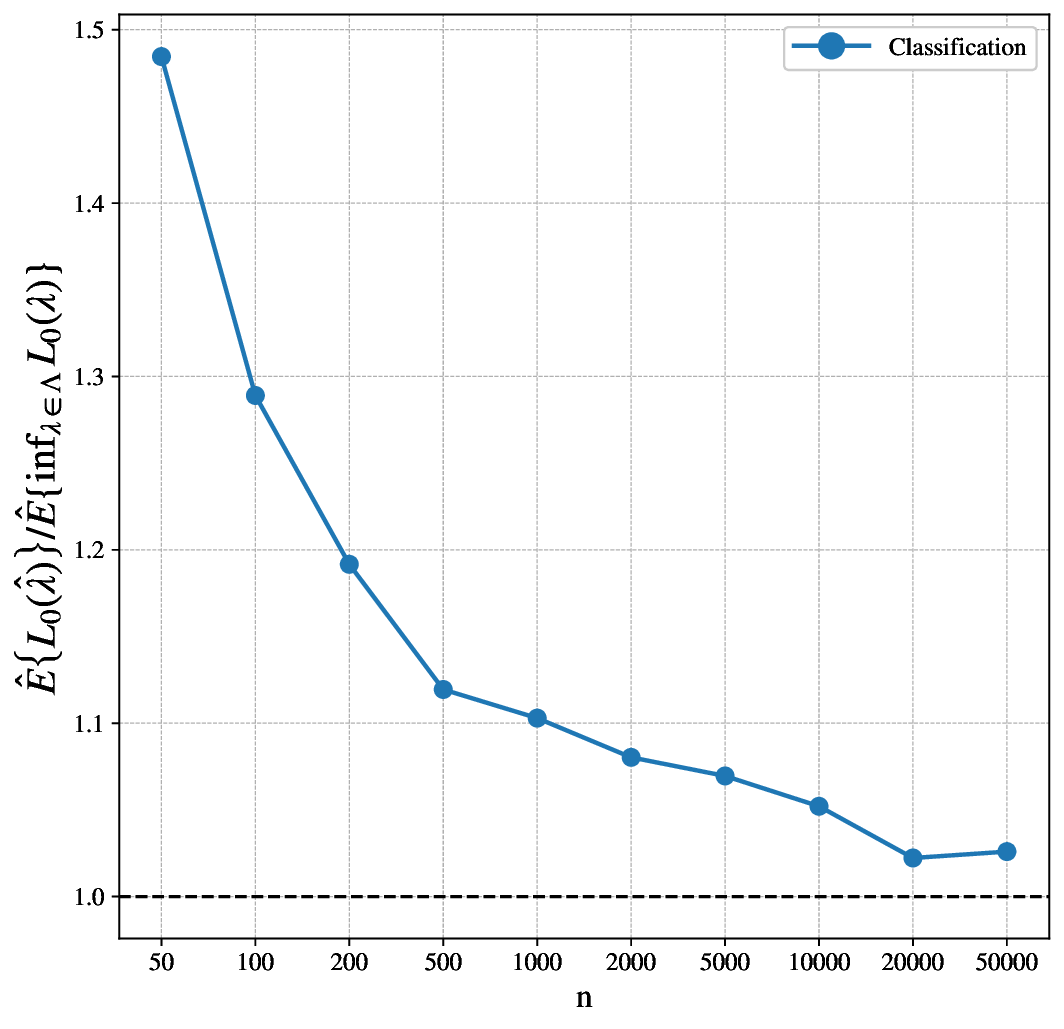}}
    \caption{Results of MLPs for binary classification. (a) The estimated value of $E\{L_0(\hat{\lambda})/\inf_{\lambda\in\Lambda}L_0(\lambda)\}$. (b) The estimated value of $E\{L_0(\hat{\lambda)}\}/E\{\inf_{\lambda\in\Lambda}L_0(\lambda)\}$.}
    \label{fig:3}
\end{figure}

Note that our experimental analysis is focused exclusively on binary classification problems since our theoretical framework can not model the expectation of labels in a multi-class classification problem. As such, the experiments we conduct in the following sections regarding classification will be restricted to binary classification problems.

\subsection{Convolutional Neural Networks} \label{sec:4.2}
In this section, we carry out experiments involving convolutional neural networks (CNNs), a specialized form of feedforward neural networks that exhibit distinct connection structures compared to MLPs. Since CNNs are often employed when processing image-based inputs, the data generating process in previous simulations is not applicable to this context. Thus, we decide to utilize widely-used public datasets for image recognition tasks. However, we are faced with a problem that a mathematical formulation representing the conditional expectation $E(Y|X)$ is unavailable in real-world datasets. This limitation prevents us from calculating the value of any $L_0(\lambda)$ in our theorem. To address this issue, we design a special method to generate new samples from the MNIST/Fashion-MNIST datasets as follows, ensuring that the theoretical value of $E(Y|X)$ remains attainable:

\begin{itemize}
    \item Define a binary-class classification problem for the MNIST/Fashion-MNIST dataset. 
    \item Change the labels of samples in the dataset to $1$ and $0$ to represent positive and negative samples of the binary-class classification problem, respectively.
    \item Train a well-performed classification model $F:\mathbb{R}^{28\times28}\rightarrow [0,1]$ with these samples.
    \item Relabel the samples by using the trained model $F$. Specifically, for any $X_i$ from a sample in the dataset, we generate its new label $Y_i$ by
    \begin{align}
     \operatorname{P}(Y_i=1|X_i) = F(X_i). \notag
    \end{align}
\end{itemize}
This method treats $F(X)$ as the true probability of input $X$ belonging to the positive class. For example, if an input $X$ satisfies $F(X)=0.75$, we generate a new sample $(X,Y)$ with $\operatorname{P}(Y=1|X) = 0.75$, and consequently, it is evident that $E(Y|X)=0.75$. 
Through this procedure, we can transform the common classification task on a real-world dataset into a binary-classification problem, and possess true knowledge of $E(Y|X)$ to verify our theorem.

\subsubsection{Binary-Classification on Fashion-MNIST} \label{sec:4.2.1}

We first describe the design of the binary-classification problem on Fashion-MNIST. Specifically, chothing that covering the upper human body is set to be Class 0, which involves label sets [0,2,3,4,6], and the rest of labels are Class 1, including clothing such as thousands, shoes, and bags. Adopting the data generating process and training procedure introduced in Section \ref{sec:4.2}, we create a new dataset derived from fashion-MNIST with labels converted to either 0 or 1 and train a well-performed model $F_1$ on this task. See Appendix \ref{appendix:3} for details of $F_1$.
 
We consider training a set of CNNs with a standard architecture consisting of two convolutional layers and two max pooling layers, followed by two fully connected layers that produce the final output. The choices of varying hyperparamters in $\Lambda$ are shown in `Fashion-MNIST' row in Table \ref{tab:2}, where Conv KS refers to kernel size of the squared convolutional layer, Chan Out refers to number of output channels by the convolution (number of convolution filters), Pool KS is kernel size of the squared pooling layer, and Pool Str is stride of pooling. Besides, learning rate, batch size, activation function and loss function are set to 0.0005, 16, Relu and CE respectively. The hidden size of the fully connected layer is set to 128. With $n$ taking values from $50$ to $10000$, we perform the same analysis as in the MLP classification problem described in Section \ref{sec:4.1.3}. Results given in Figure \ref{fig:fashion} show Theorem \ref{theorem:2} also works in this scenario.



\begin{table}[htbp]
    \centering
    \begin{tabular}{lccc}
        \toprule
        Scenario & (Conv KS, Chan Out) & Pool KS & Pool Str \\
        \midrule
        Fashin-MNIST & (3,4),(3,16),(3,64),(4,4),(4,16),(4,64),(5,4),(5,16) & 2,3 & 1,2 \\
        MNIST & (3,4),(3,16),(3,64),(4,16),(4,64) & 2,3 & 1,2 \\
        \bottomrule
    \end{tabular}
    \caption{Design of $\Lambda$ for experiments with CNNs}  \label{tab:2}
\end{table}

\begin{figure}[htbp]
    \centering
    \subfigure[]{
    \label{fig:fashiona}
    \includegraphics[width=0.47\linewidth]{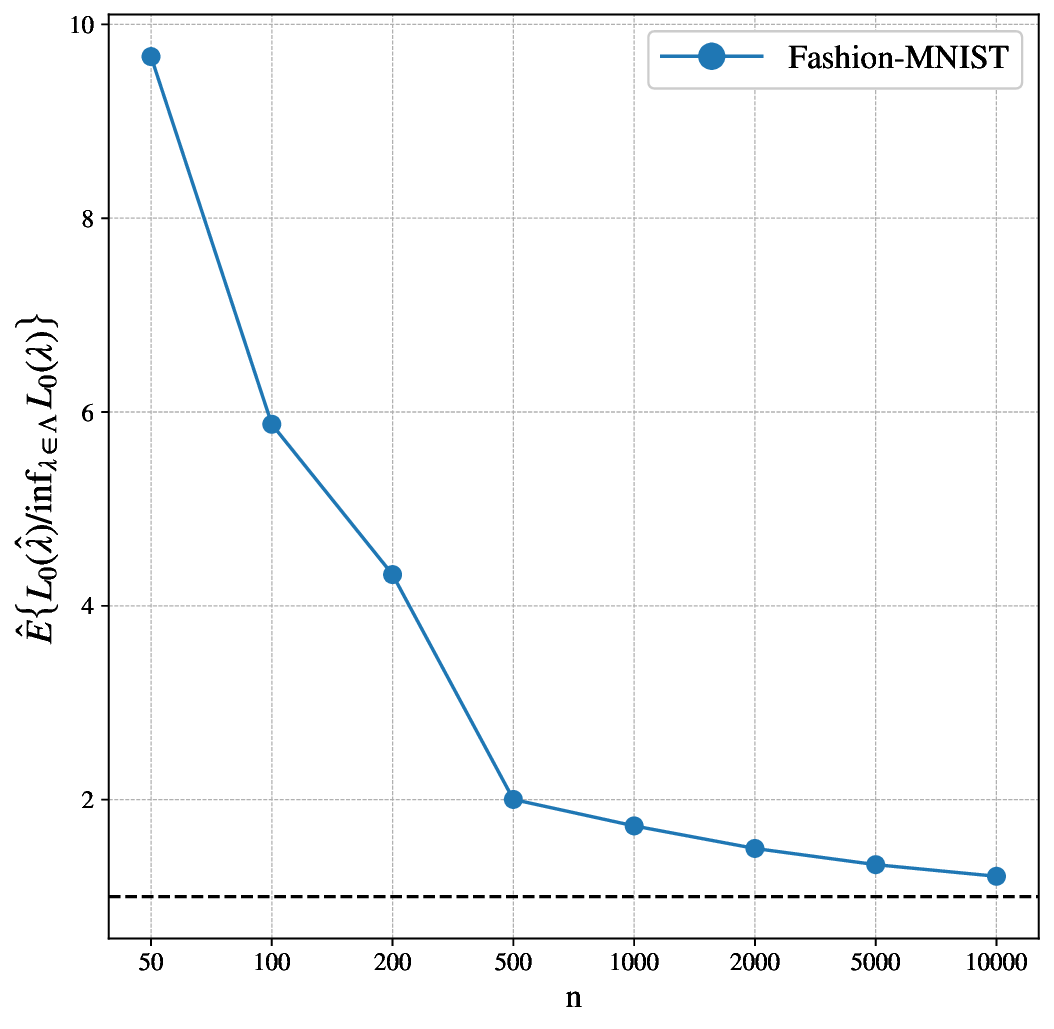}}
    \subfigure[]{
    \label{fig:fashionb}
    \includegraphics[width=0.47\linewidth]{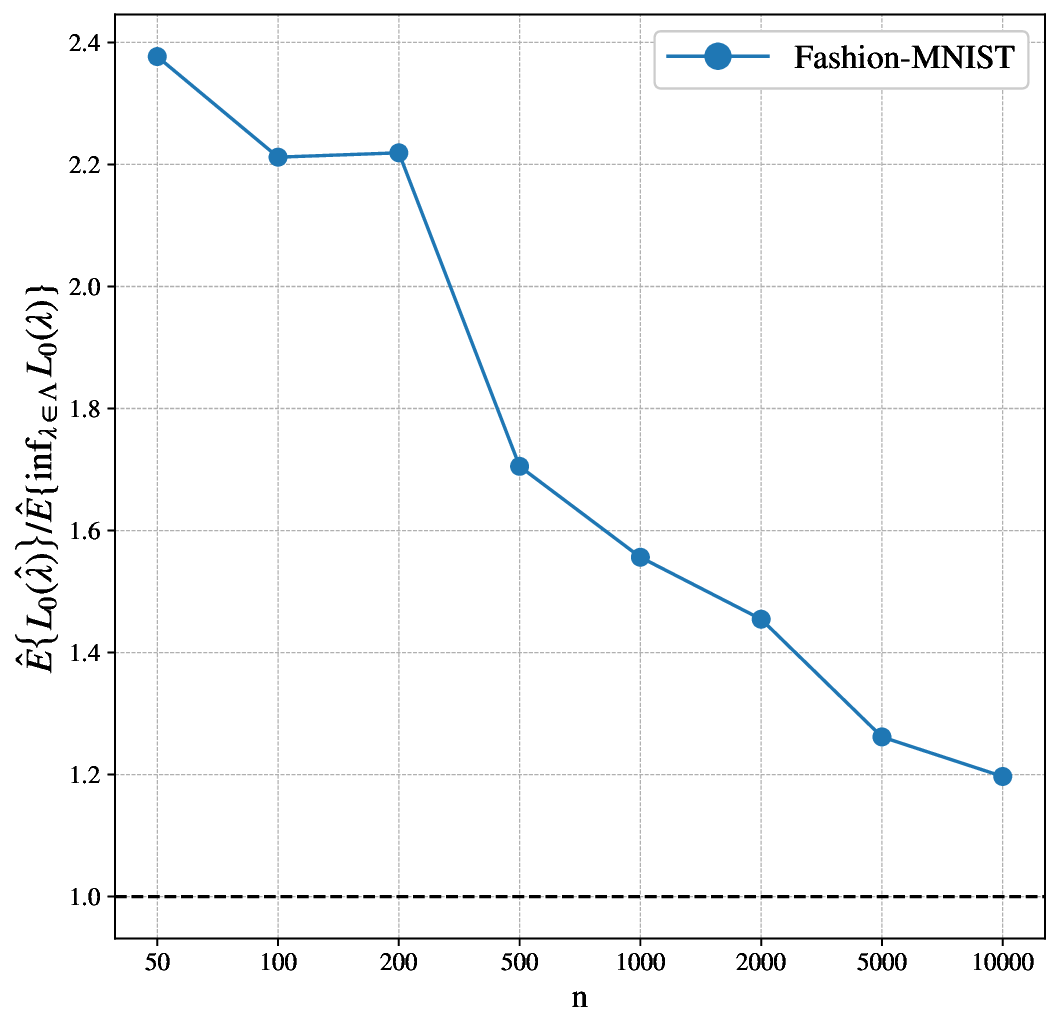}}
    \caption{Results of CNNs for binary classification on Fashion-MNIST. (a) The estimated value of $E\{L_0(\hat{\lambda})/\inf_{\lambda\in\Lambda}L_0(\lambda)\}$. (b) The estimated value of $E\{L_0(\hat{\lambda)}\}/E\{\inf_{\lambda\in\Lambda}L_0(\lambda)\}$.}
    \label{fig:fashion}
\end{figure}

\subsubsection{Binary-Classification on MNIST} \label{sec:4.2.2}
We also consider a binary-Classification problem on the MNIST dataset, which aims at identifying whether a handwritten digit represents a number `1'. Following the procedure and the data generating process introduced in Section \ref{sec:4.2}, we create a new dataset derived from MNIST for binary-classification and train a well-performed neural network model $F_2$ on it. See Appendix \ref{appendix:3} for details of $F_2$.

We consider training a set of CNNs that have the same standard architecture in Section \ref{sec:4.2.1}. The fixed hyperparameters are also set as same as those in Section \ref{sec:4.2.1}, and the varying hyperparamters in $\Lambda$ can be viewed in `MNIST' Table \ref{tab:2}. The results are shown in Figure \ref{fig:4}. It is clear that the ratio converge to 1 in Figure \ref{fig:4a}. The convergence speed of the ratio shown in \ref{fig:4b} is shower than in other scenarios, which might be attributed to simplicity of this problem. Specifically, we observed the performance of models under different hyperparamters and discovered that almost every network achieved satisfactory prediction results, which means a larger sample size is required to effectively distinguish the optimal hyperparameter.

\begin{figure}[htbp]
    \centering
    \subfigure[]{
    \label{fig:4a}
    \includegraphics[width=0.47\linewidth]{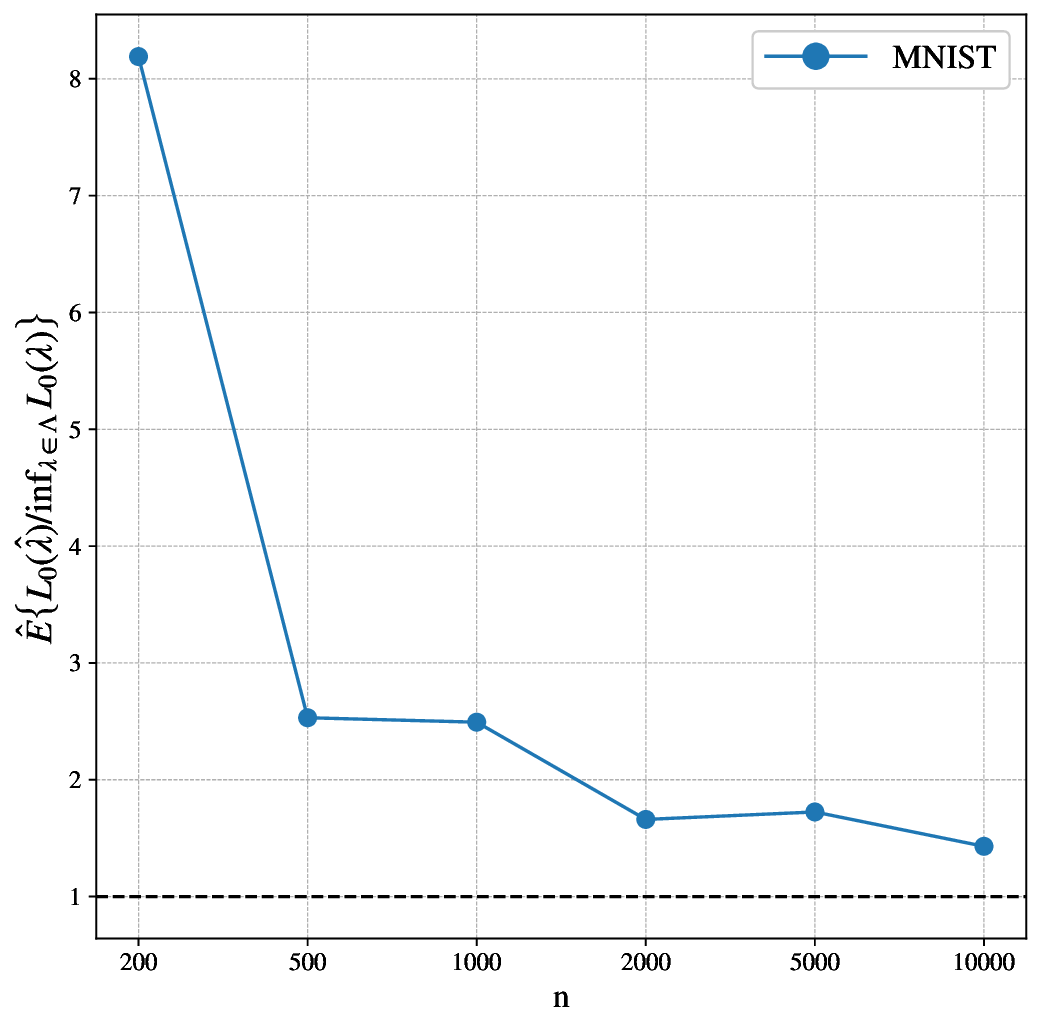}}
    \subfigure[]{
    \label{fig:4b}
    \includegraphics[width=0.47\linewidth]{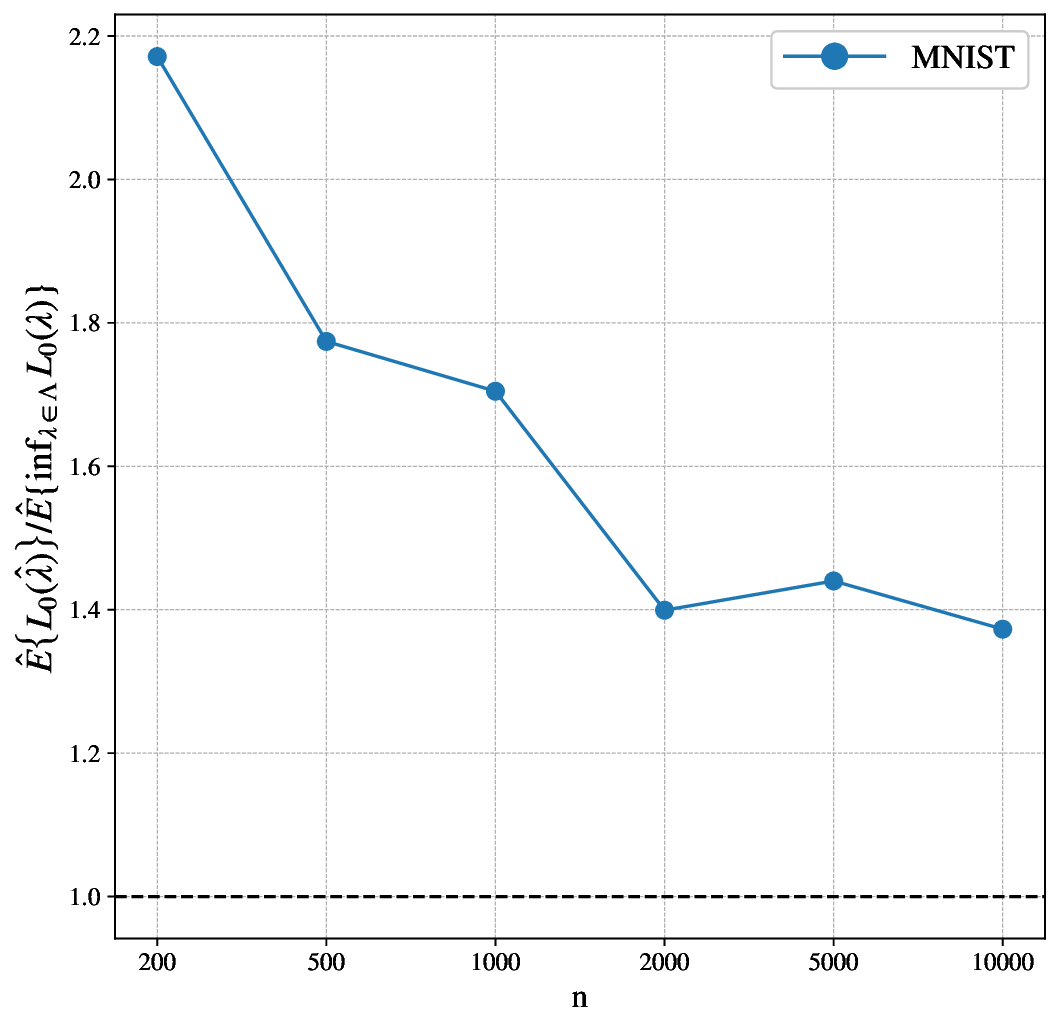}}
    \caption{Results of CNNs for binary classification on MNIST. (a) The estimated value of $E\{L_0(\hat{\lambda})/\inf_{\lambda\in\Lambda}L_0(\lambda)\}$. (b) The estimated value of $E\{L_0(\hat{\lambda)}\}/E\{\inf_{\lambda\in\Lambda}L_0(\lambda)\}$.}
    \label{fig:4}
\end{figure}

\begin{table}[htbp]
    \centering
    \begin{tabular}{lcccc}
        \toprule
        & LR & HS & Depth & L \\
        \midrule
        RNN & 0.01, 0.001 & 50, 100 & 1, 2 & 3, 4, 5, 6 \\
        \bottomrule
    \end{tabular}
    \caption{Design of $\Lambda$ for experiments with RNNs}    \label{tab:3}
\end{table}

\subsection{Recurrent Neural Networks}
In this section, we present an experiment to show that the optimality based on another neural network architecture, recurrent neural network (RNN), holds as well. Since RNN is often used to deal with temporal dynamic behavior of inputs, we designed a time series forecasting problem here. Two types of time series we consider are:
\begin{align}\label{eq:ts}
\begin{aligned}
&\text{1. } y_t = 0.6y_{t-1} + 0.3y_{t-2} - 0.1y_{t-3} +  \epsilon_t,  \\
&\text{2. } y_t=0.3 y_{t-1}+0.6 y_{t-2}+\left(0.1-0.9 y_{t-1}+0.8 y_{t-2}\right)\left[1+\exp \left(-10 y_{t-1}\right)\right]^{-1}+\epsilon_t, 
\end{aligned}    
\end{align}

which are linear and nonlinear respectively, where $\epsilon_t$ is independent for $t\geq 0$ and follows  $N(0,\sigma^2)$ and the initial values of these time series follow uniform distribution between $-6$ to $6$. Our task here is to make one-step forward prediction for $y_t$ given the history values of $\{y_{i}\}_{i=1}^{t-1}$. To construct the samples for training from the origin time series, we apply the sliding window technique. A time series with length $T$ can generate $T-p$ training samples, where $p$ is the input length.  


Apart from the hyperparameters considered in the previous simulation, the input length is considered to be another hyperparameter, which is commonly required to be tuned in practice. See Table \ref{tab:3} for details of $\Lambda$, where $L$ represents the input length. Again we proceed with our simulations with different choice of sample size $n$, and deal with the results similarly, and we show them in Figure \ref{fig:5}. Although the speed of convergence is slower than in those cases in feedforward neural networks, the two concerning ratios are indeed approaching 1.

\begin{figure}[htbp]
    \centering
    \subfigure[]{
    \label{fig:5a}
    \includegraphics[width=0.47\linewidth]{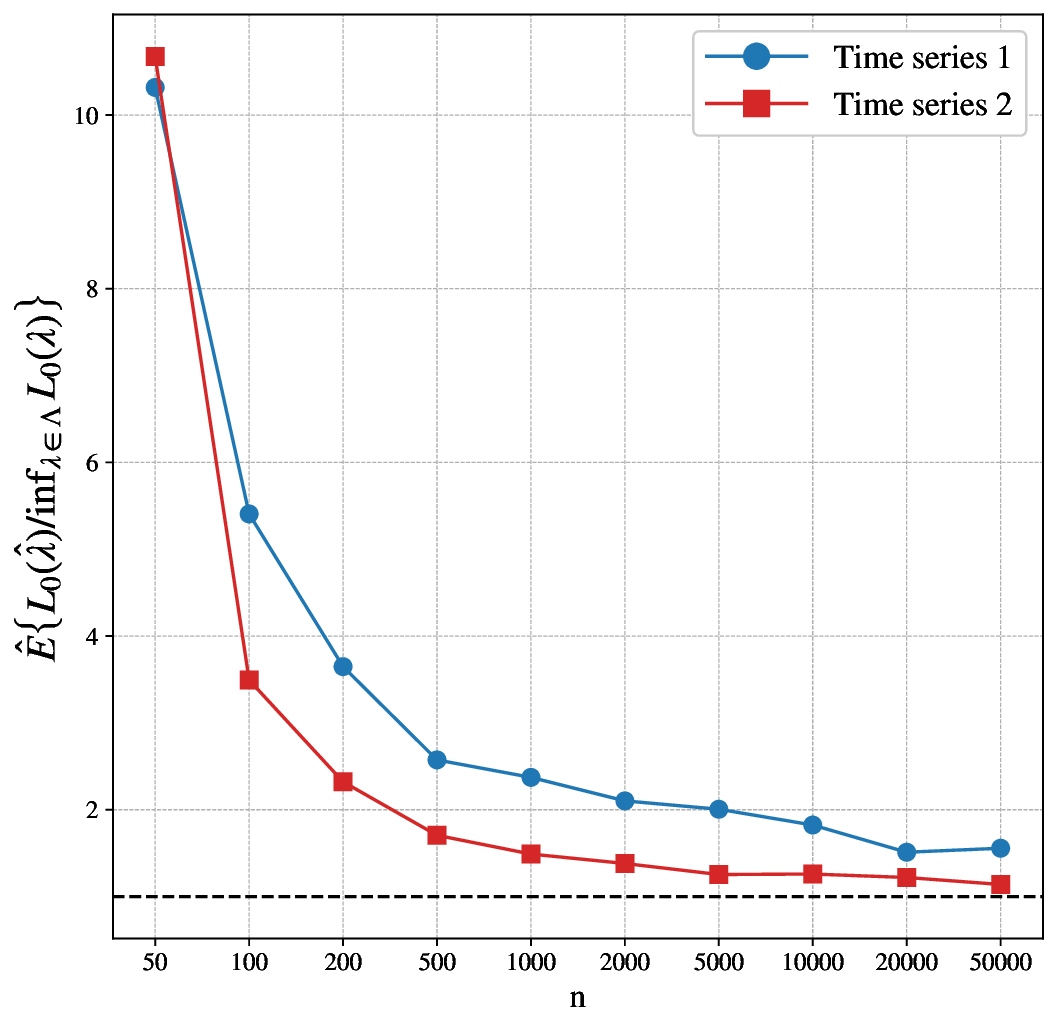}}
    \subfigure[]{
    \label{fig:5b}
    \includegraphics[width=0.47\linewidth]{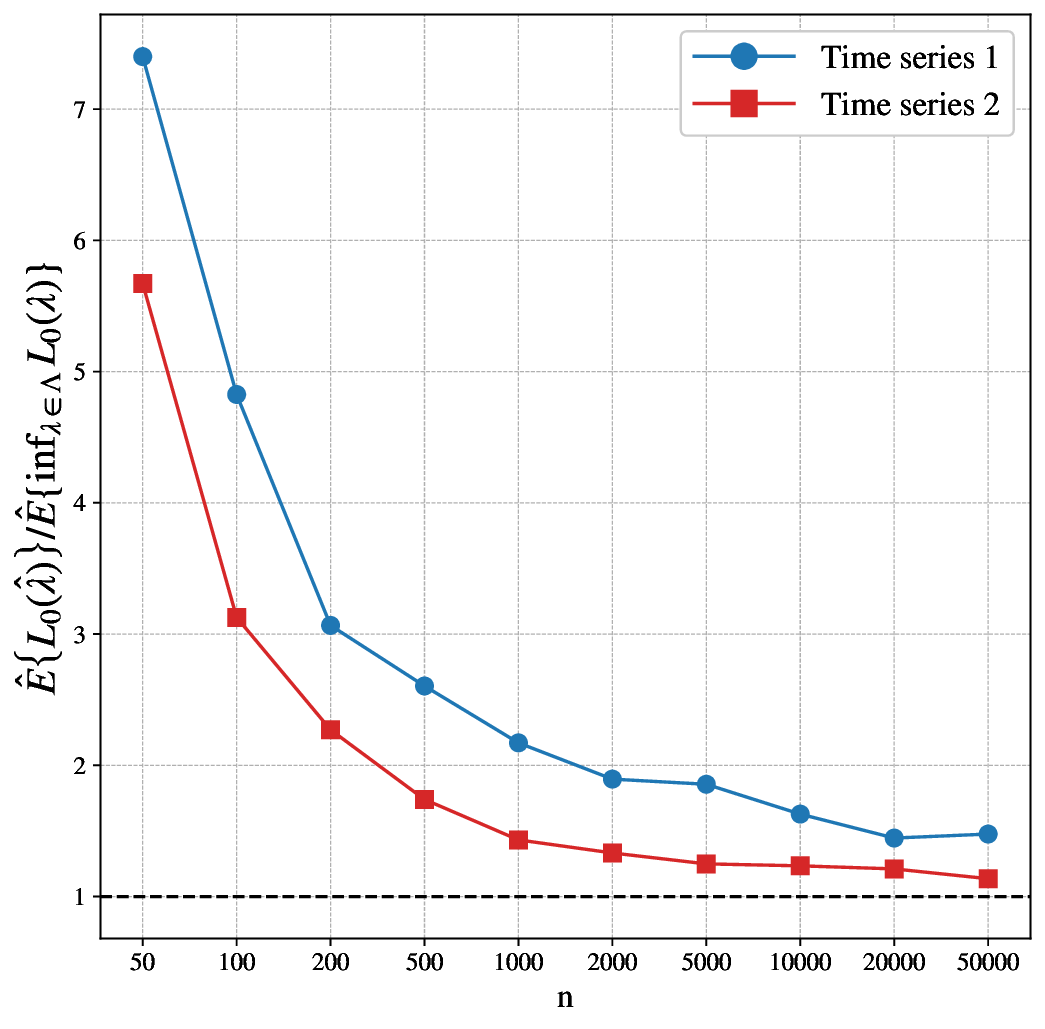}}
    \caption{Results of RNNs for time series forecasting. (a) The estimated value of $E\{L_0(\hat{\lambda})/\inf_{\lambda\in\Lambda}L_0(\lambda)\}$. (b) The estimated value of $E\{L_0(\hat{\lambda)}\}/E\{\inf_{\lambda\in\Lambda}L_0(\lambda)\}$.}
    \label{fig:5}
\end{figure}

Since the samples obtained by (\ref{eq:ts}) and the subsequent sliding window method are not independent, in violation of our theory’s i.i.d. assumption at the beginning of Section \ref{sec:2}, this scenario is distinct from previous ones. Nonetheless, we still regard our theorem valid in the case that samples are correlated according to results in Figure \ref{fig:5}.

\section{Conclusion} \label{sec:5}
Our study offers a new theoretical explanation for the success of neural networks in practical applications. In order to construct an effective neural network model for any given task, optimization should be performed to obtain the most appropriate hyperparameters for the neural network, as they have a decisive impact on the performance of the final network. According to our proposed theories, it is through the use of sample splitting that we can acquire the optimal hyperparameters for the network, thus enabling the attainment of the optimal neural network from a family of neural network models. Through a series of experiments across various problem domains and data distributions, we verify our theoretical results on several prevalent neural network architectures. 

Our framework has contributed new theoretical insights into neural networks, but further research is needed to address the limitations in this paper and explore more widely applicable theories. In the process of establishing Theorem \ref{theorem:2}, our intuition suggested that the condition of the divergence of $n_0$ could be relaxed while proving (\ref{eq:ad1}) and (\ref{eq:ad2}), since both equations involve taking expectations. Nonetheless, we have encountered challenges with a relaxed condition and thus, leave it as a topic for future research. Inspired by our experiment on recurrent neural networks, extending our theory beyond an i.i.d. assumption will also be part of our future work. Finally, while the largest parameter space remained fixed in this article, another future extension would be to consider the size of the available set of hyperparameters diverging with the number of samples, which would also involve the diverging dimension of model parameter $w$.



\newpage
\appendix
\begin{appendix}
\allowdisplaybreaks[4]

\section{Examples for Verifying Assumption \ref{assumption:2}(ii)} \label{appendix:1}
In this section, we provide several examples to illustrate that Assumption \ref{assumption:2}(ii) holds under some primitive conditions. Specifically, we only assume that $E\|X\|^2=O(1)$. Before proceeding, we claim that
\begin{align}
\label{eq:a1.1}
    \sup_{\lambda\in\Lambda}\sup_{w^{0}\in \mathcal{O}(w^*(\lambda),\rho)} \|w^0\|\leq C_{w},
\end{align}
where $C_{w}$ is a fixed constant. In fact, since $|\Lambda|<\infty$, this is straightforward by letting
\begin{align}
    C_{w} = \sup_{\lambda\in\Lambda} \|w^*(\lambda)\| + \rho. \notag
\end{align}
Then, we present two examples as follows, whose conclusions can also be extended to more complicated architectures.

\subsection{Multilayer Perceptrons} \label{appendix:1.1}
Consider a two-layer full-connected neural network corresponding to hyperparameter $\lambda$ with $k$ hidden nodes and input $x=(x_1,...,x_p)^\top\in\mathbb{R}^p$
\begin{align}
&f_{\lambda}(x,w) = \sum_{i=1}^{k(\lambda)} \alpha_ih_i(x)+\alpha_0, \notag\\
&h_i(x)= \sigma(\sum_{j=1}^p \beta_{ij}x_j+b_i),\quad i=1,...,k, \notag 
\end{align}
where $\sigma(\cdot)$ is the sigmoid activation function. Different choices of $\lambda$ correspond to varying values of $k$. Here, $w=(\alpha_0,...,\alpha_k,b_1,...,b_k,\beta_{11},...,\beta_{kp})^\top$. It can be easily shown that $\partial f/\partial\alpha_0=1$, $\partial f/\partial\alpha_i=h_i(x)$, $\partial f/\partial b_i=\alpha_ih_i(x)(1-h_i(x))$, and that $\partial f/\partial\beta_{ij}=\alpha_ix_jh_i(x)(1-h_i(x))$. This implies
$$
\bigg\|\frac{\partial f_{\lambda}(X,w)}{\partial w}\bigg\|^2 \leq c_1+c_2(1+\|X\|^2)\cdot \|w\|^2.
$$
where $c_1$ and $c_2$ are constants that can be chosen to be independent of $k$ due to $|\Lambda|\leq\infty$. Then, by (\ref{eq:a1.1}) and $E\|X\|^2=O(1)$, we obtain
\begin{align}
    E\sup_{w^{0}\in \mathcal{O}(w^*(\lambda),\rho)}\bigg\|\frac{\partial f_{\lambda}(X,w)}{\partial w}\Big|_{w=w^0}\bigg\|\leq E\big\{\sqrt{c_1+c_2(1+\|X\|^2)C_{w}^2 }\big\} = O(1) \notag
\end{align}
This demonstrates that (\ref{eq:assumption2}) in Assumption \ref{assumption:2} holds for this example.

\subsection{Recurrent Neural Networks} \label{appendix:1.2}
We consider a simple recurrent neural network (RNN) structure 
\begin{align}
    &h^{(1)} = \sigma(Ux^{(1)}+b), \notag\\
    &h^{(t)} = \sigma(Wh^{(t-1)}+Ux^{(t)}+b),\quad t\geq2\notag\\
    & o^{(t)} = \tanh(V\cdot h^{(t)}+b^{\prime}), \quad t\geq2\notag,
\end{align}
where $x^{(t)}\in\mathbb{R}^{p}$ denote the input, with $p$ representing the size of the input, and $o^{(t)}\in \mathbb{R}$ denote output at time step $t$. The sigmoid and hyperbolic tangent activation functions are denoted by $\sigma(\cdot)$ and $\tanh(\cdot)$, respectively. Let $w=\text{vec}(U,V,W,b,b^\prime)$ denote the network parameters, where $U\in \mathbb{R}^{K\times p}$, $W\in \mathbb{R}^{K\times K}$, $V\in \mathbb{R}^{K}$, $b\in \mathbb{R}^{K}$, and $b^{\prime}\in \mathbb{R}$.

Now consider a sequential input $x=(x^{(1)},...,x^{(T)})\in\mathbb{R}^{p\times T}$, and assume that the output of this network is given by $f_{\lambda}(w,x) = o^{(T)}$. The different choices of $\lambda$ can correspond to varying values of hyperparameters $T$, $p$ and $K$. To simplify the notation, we define the following terms: 
$$
c^{(t)} = Wh^{(t-1)}+Ux^{(t)}+b,\quad z^{(t)} = Vh^{(t)}+b^{\prime},
$$ 
and we use subscripts to represent the entries of vector or matrix as follows: $U=(U_{ij})_{K\times p}$, $W=(W_{ij})_{K\times K}$, $V=(V_{i})_{K}$, $c^{(t)}=(c^{(t)}_i)_{K}$, and $h^{(t)}= (h^{(t)}_i)_{K}$. To verify Assumption \ref{assumption:2}(ii), we derive some upper bounds for the derivatives of $o^{(T)}$ with respect to $w$.

By performing straightforward differentiation, we obtain
\begin{align}
    &\frac{\partial o^{(T)}}{\partial b^{\prime}} = \frac{\partial o^{(T)}}{\partial z^{(T)}}, \notag\\
    &\frac{\partial o^{(T)}}{\partial V_i} = h_i^{(T)}\frac{\partial o^{(T)}}{\partial z^{(T)}},\quad 1\leq i\leq K. \notag
\end{align}
According to properties of sigmoid and tanh function, for any $t\geq 1$ and any $1\leq j,l\leq K$, we have
\begin{align}
\label{eq:rnn3.3}
    &\big|h_j^{(t)}\big|<1, \quad \big|z^{(t)}\big|<1, \notag \\
    &\bigg|\frac{\partial h^{(t)}_{l}}{\partial c^{(t)}_{l}}\bigg| = \big|\sigma( c^{(t)}_{l})(1-c^{(t)}_{l})\big|<1, \notag\\
    &\bigg|\frac{\partial o^{(t)}}{\partial z^{(t)}}\bigg| = \big|1- \tanh^2(z^{(t)})\big|<1,
\end{align}
which implies that the derivatives of $o^{(T)}$ with respect to $b^\prime$ and $V$ are bounded by constants. For the remaining parameters, we claim that for any $t=2,...,T$, there exists constants $c^{(t)}$ and polynomials $C^{(t)}=C^{(t)}(\|w\|)$ respect to $\|w\|$ with at most degree $t$ and non-negative coefficients, such that 
\begin{align}
    &\bigg| \frac{\partial h^{(t)}_{l}}{\partial b_{i}}\bigg|\leq  C^{(t)},\quad 1\leq i,l \leq K \notag\\
    &\bigg| \frac{\partial h^{(t)}_{l}}{\partial W_{ij}}\bigg|\leq C^{(t)},\quad 1\leq i,j,l \leq \notag K\\
    &\bigg|\frac{\partial h^{(t)}_{l}}{\partial U_{ij}}\bigg|\leq \|x\|\cdot C^{(t)},\quad 1\leq i,l \leq K,\ 1\leq j\leq p \label{eq:rnn1.5}
\end{align}
To illustrate the claim (\ref{eq:rnn1.5}) by mathematical induction, we first consider the case when $t=2$. By applying the chain rule, we have
\begin{align}
    \frac{\partial h^{(2)}_{l}}{\partial b_{i}} &= \frac{\partial h^{(2)}_{l}}{\partial c^{(2)}_{l} }\cdot \frac{\partial c^{(2)}_{l}}{\partial b_{i}} \notag\\
    &=\frac{\partial h^{(2)}_{l}}{\partial c^{(2)}_{l} }\cdot\bigg\{\delta_{il}+W_{li}\frac{\partial h^{(1)}_i}{\partial c^{(1)}_{i}} \bigg\}\label{eq:rnn1.6} \\
    \frac{\partial h^{(2)}_{l}}{\partial W_{ij}} &= \frac{\partial h^{(2)}_{l}}{\partial c^{(2)}_{l}}\cdot \frac{\partial c^{(2)}_{l}}{\partial W_{ij}} \notag\\
    & = \frac{\partial h^{(2)}_{l}}{\partial c^{(2)}_{l}} \cdot \frac{\partial (\sum_{m=1}^K W_{lm}h^{(1)}_m) }{\partial W_{ij}} =h_j^{(1)} \frac{\partial h^{(2)}_{l}}{\partial c^{(2)}_{l}},\label{eq:rnn2}\\
    \frac{\partial h^{(2)}_{l}}{\partial U_{ij}} &= \frac{\partial h^{(2)}_{l}}{\partial c^{(2)}_{l}}\cdot \frac{\partial c^{(2)}_{l}}{\partial U_{ij}} \notag \\
    &=  \frac{\partial h^{(2)}_{l}}{\partial c^{(2)}_{l}}\cdot\bigg\{\frac{\partial \sum_{m=1}^K W_{lm}h^{(1)}_m }{\partial U_{ij} }+ \frac{\partial \sum_{m=1}^K U_{lm}x_m^{(2)}}{\partial U_{ij}}\bigg\} \notag\\
    &= \frac{\partial h^{(2)}_{l}}{\partial c^{(2)}_{l}}\cdot \bigg\{ \sum_{m=1}^KW_{lm}\frac{\partial h^{(1)}_m }{\partial U_{ij}} + \delta_{il}x_j^{(2)} \bigg\} \notag\\
    &= \frac{\partial h^{(2)}_{l}}{\partial c^{(2)}_{l}}\cdot \bigg\{ W_{li}x_j^{(1)}\frac{\partial h_i^{(1)}}{\partial c_i^{(1)}} + \delta_{il}x_j^{(2)} \bigg\},  \label{eq:rnn5}
\end{align}
Then, by (\ref{eq:rnn3.3}), (\ref{eq:rnn1.6}), (\ref{eq:rnn2}) and  (\ref{eq:rnn5}), we can obtain
\begin{align}
\label{eq:rnn1.8}
    &\bigg|\frac{\partial h^{(2)}_{l}}{\partial b_{i}}\bigg| \leq 1+\|W\|,\notag\\
    &\bigg|\frac{\partial h^{(2)}_{l}}{\partial W_{ij}}\bigg| <1, \notag\\
    &\bigg|\frac{\partial h^{(2)}_{l}}{\partial U_{ij}}\bigg| \leq (1+\|W\|)\|x\|,
\end{align}
Therefore, by (\ref{eq:rnn1.8}) and $\|V\|,\|W\|\leq\|w\|$, our claim (\ref{eq:rnn1.5}) holds for the case $t=2$. Now, we assume that (\ref{eq:rnn1.5}) holds for case $t-1$, and consider the case at $t$. By the chain rule, we have
\begin{align}
    \frac{\partial h^{(t)}_{l}}{\partial b_{i}} &= \frac{\partial h^{(t)}_{l}}{\partial c^{(t)}_{l} }\cdot \frac{\partial c^{(t)}_{l}}{\partial b_{i}} \notag\\
    &=\frac{\partial h^{(t)}_{l}}{\partial c^{(t)}_{l} }\cdot\bigg\{\delta_{il}+\sum_{m=1}^K W_{lm}\frac{\partial h^{(t-1)}_m}{\partial b_{i}} \bigg\},\notag\\
    \frac{\partial h^{(t)}_{l}}{\partial W_{ij}} &=  \frac{\partial h^{(t)}_{l}}{\partial c^{(t)}_{l}}\cdot \frac{\partial c^{(t)}_{l}}{\partial W_{ij}} \notag\\
    &= \frac{\partial h^{(t)}_{l}}{\partial c^{(t)}_{l}} \cdot \frac{\partial (\sum_{m=1}^KW_{lm}h^{(t-1)}_m) }{\partial W_{ij}} \notag\\
    &= \frac{\partial h^{(t)}_{l}}{\partial c^{(t)}_{l}} \bigg\{ \sum_{m=1}^K W_{lm}  \frac{\partial h^{(t-1)}}{\partial W_{ij}} +\delta_{il}h_j^{(t-1)}\bigg\},  \notag\\
    \frac{\partial h^{(t)}_{l}}{\partial U_{ij}} &=  \frac{\partial h^{(t)}_{l}}{\partial c^{(t)}_{l}}\cdot \frac{\partial c^{(t)}_{l}}{\partial U_{ij}} \notag\\
    &=\frac{\partial h^{(t)}_{l}}{\partial c^{(t)}_{l}} \cdot \bigg\{  \frac{\partial\sum_{m=1}^KW_{lm}h_m^{(t-1)} }{\partial U_{ij}} + \frac{\partial \sum_{m=1}^K U_{lm}x_m^{(t)}}{\partial U_{ij}} \bigg\} \notag\\
    &= \frac{\partial h^{(t)}_{l}}{\partial c^{(t)}_{l}} \cdot \bigg\{  \sum_{m=1}^KW_{lm}\frac{\partial h_m^{(t-1)}}{\partial U_{ij}} + \delta_{il}x_j^{(t)} \bigg\} \notag
\end{align}
which implies
\begin{align}
\label{eq:rnn7.8}
    &\bigg|\frac{\partial h^{(t)}_{l}}{\partial b_{i}}\bigg| \leq 1+K\|W\|\max_{m\leq K}\bigg|\frac{\partial h^{(t-1)}_{l}}{\partial b_{i}}\bigg| \notag\\
    &\bigg|\frac{\partial h^{(t)}_{l}}{\partial W_{ij}} \bigg| \leq 1+K\|W\|\max_{m\leq K}\bigg|\frac{\partial h^{(t-1)}_{l}}{\partial W_{ij}}\bigg|, \notag\\
    &\bigg|\frac{\partial h^{(t)}_{l}}{\partial U_{ij}} \bigg| \leq \|x\|+K\|W\|\max_{m\leq K}\bigg|\frac{\partial h^{(t-1)}_{l}}{\partial U_{ij}} \bigg|.
\end{align}
By using (\ref{eq:rnn7.8}) and the induction hypothesis, we can construct an appropriate $C^{(t)}$ for (\ref{eq:rnn1.5}) for the case $t$, thereby completing the induction. 

Now, we proceed to upper bound our objectives. By applying the chain rule, we have
\begin{align}
    \frac{\partial o^{(t)}}{\partial b_{i}} &=\frac{\partial o^{(t)}}{\partial z^{(t)} }\sum_{l=1}^KV_{l}\cdot\frac{\partial h^{(t)}_{l}}{\partial b_{i}},\notag\\
    \frac{\partial o^{(t)}}{\partial U_{ij}} &= \frac{\partial o^{(t)}}{\partial z^{(t)}}\sum_{l=1}^KV_{l}\cdot\frac{\partial h^{(t)}_{l}}{\partial U_{ij}},\notag\\
    \frac{\partial o^{(t)}}{\partial W_{ij}} &= \frac{\partial o^{(t)}}{\partial z^{(t)}}\sum_{l=1}^K V_{l}\cdot \frac{\partial h^{(t)}_{l}}{\partial W_{ij}}. \label{eq:rnn10}
\end{align}
Then, it follows from (\ref{eq:rnn3.3}), (\ref{eq:rnn7.8}), (\ref{eq:rnn10}) and Cauchy-Schwarz inequality that 
\begin{align}
    &\bigg|\frac{\partial o^{(T)}}{\partial b_{i}}\bigg| \leq \sqrt{K}\|V\|\bigg|\frac{\partial h^{(t)}_{l}}{\partial b_{i}}\bigg|,\notag\\
    &\bigg|\frac{\partial o^{(T)}}{\partial W_{ij}} \bigg| \leq \sqrt{K}\|V\|\bigg|\frac{\partial h^{(t)}_{l}}{\partial W_{ij}}\bigg|,\notag\\
    &\bigg|\frac{\partial o^{(T)}}{\partial U_{ij}} \bigg| \leq \sqrt{K}\|V\|\bigg|\frac{\partial h^{(t)}_{l}}{\partial U_{ij}} \bigg|. \label{eq:rnn11}
\end{align}
Incorporating (\ref{eq:rnn1.5}) and (\ref{eq:rnn11}), we obtain that
\begin{align}
\label{eq:new-0}
    \bigg\|\frac{\partial f_{\lambda}(x,w)}{\partial w}\bigg\| = \bigg\|\frac{\partial o^{(T)}}{\partial w}\bigg\| &\leq \sqrt{c_1+ c_2{\{C^{(T)}(\|w\|)\}}^2+c_3{\{C^{(T)}(\|w\|)\}}^2\|x\|^2} \notag\\
    &\leq c_1+ c_2{\{C^{(T)}(\|w\|)\}}^2+c_3{\{C^{(T)}(\|w\|)\}}^2\|x\|^2
\end{align}
where $c_1$ ($c_1>1$), $c_2$ and $c_3$ are constants that, according to $|\Lambda|\leq \infty$, can be selected independently of any $K$, $p$, and $T$. Finally, since $C^{(T)}$ is monotonically increasing with respect to $\|w\|$, it follows from (\ref{eq:a1.1}) and (\ref{eq:new-0})that 
\begin{align}
    E\sup_{w^{0}\in \mathcal{O}(w^*(\lambda),\rho)}\bigg\|\frac{\partial f_{\lambda}(X,w)}{\partial w}\Big|_{w=w^0}\bigg\|&\leq E\Big\{c_1+c_2{\{C^{(T)}(C_w)\}}^2+c_3{\{C^{(T)}(C_w)\}}^2\|X\|^2 \Big\}\notag\\
    &= O(1), \notag
\end{align}
where the equality follows from our assumption $E\|X\|^2=O(1)$. This verify Assumption \ref{assumption:2}(ii) in this example.

\section{Proof of Theoretical Results}  \label{appendix:2}
\subsection{Supporting Lemma(s)} \label{appendix:2.1}
We introduce the following lemma that will be used in our proof. 

\begin{lemma} \label{lemma:1}
Let $\Lambda$ be the available set of hyperparameters, which can have finite or diverging elements. For any $\lambda\in\Lambda$, $A_n(\lambda)$, $a_n(\lambda)$, and $B_n(\lambda)$ are sequences related to $\lambda$. Let $\eta_n$ be a real-valued positive sequence that satisfies $\eta_n=o(1)$. Define $\hat{\lambda} = \operatorname{argmin}_{\lambda\in\Lambda}\{A_n(\lambda)+a_n(\lambda)\}$. If 
\begin{align} \label{eq:lem1}
\sup_{\lambda\in\Lambda} \bigg|\frac{a_n(\lambda)}{B_n(\lambda)} \bigg| = O_p(\eta_n),
\end{align}
\begin{align}\label{eq:lem2}
\sup_{\lambda\in\Lambda} \bigg| \frac{A_n(\lambda)-B_n(\lambda)}{B_n(\lambda)}\bigg|  = O_p(\eta_n),
\end{align}
and there exists a interger $N$ and a constant $\kappa$ such that when $n\geq N$,
\begin{align} \label{eq:lem3}
    \inf_{\lambda \in \Lambda} B_n(\lambda) \geq \kappa>0,
\end{align}
then 
\begin{align}\label{eq:lem4}
   \frac{A_n(\hat{\lambda})}{\operatorname{inf}_{\lambda \in \Lambda}A_n(\lambda)} = 1+O_p(\eta_n) =1+o_p(1).
\end{align}

\end{lemma}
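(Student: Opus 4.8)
The plan is to establish a finite-sample oracle-type inequality and then pass to the limit, treating $B_n(\lambda)$ as the reference against which both $A_n(\lambda)$ and the perturbation $a_n(\lambda)$ are measured. Conditions \eqref{eq:lem1} and \eqref{eq:lem2} say exactly that $A_n$ and $a_n$ are controlled \emph{relatively} to $B_n$, uniformly in $\lambda$, so I would first name the two random relative errors
\begin{align}
\delta_n = \sup_{\lambda\in\Lambda}\bigg|\frac{A_n(\lambda)-B_n(\lambda)}{B_n(\lambda)}\bigg|, \qquad \epsilon_n = \sup_{\lambda\in\Lambda}\bigg|\frac{a_n(\lambda)}{B_n(\lambda)}\bigg|, \notag
\end{align}
both of which are $O_p(\eta_n)=o_p(1)$ by hypothesis. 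From the definition of $\delta_n$ one obtains the two-sided sandwich $(1-\delta_n)B_n(\lambda)\le A_n(\lambda)\le(1+\delta_n)B_n(\lambda)$ for every $\lambda$, and combining the lower side with \eqref{eq:lem3} yields $A_n(\lambda)\ge(1-\delta_n)\kappa$ for $n\ge N$. On the high-probability event $\{\delta_n<1/2\}$ this forces $\inf_{\lambda}A_n(\lambda)>0$, so that the ratio in \eqref{eq:lem4} is well defined; this is where condition \eqref{eq:lem3} is indispensable, especially in the diverging-$\Lambda$ case where one cannot simply invoke finiteness.

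Next I would exploit the defining property of the minimizer. Since $\hat\lambda=\operatorname{argmin}_{\lambda}\{A_n(\lambda)+a_n(\lambda)\}$, for every $\lambda$ we have $A_n(\hat\lambda)+a_n(\hat\lambda)\le A_n(\lambda)+a_n(\lambda)$, hence
\begin{align}
A_n(\hat\lambda)\le A_n(\lambda)+|a_n(\lambda)|+|a_n(\hat\lambda)|. \notag
\end{align}
The key manipulation is to re-express the perturbation $|a_n(\cdot)|$ in units of $A_n(\cdot)$ rather than $B_n(\cdot)$: on $\{\delta_n<1/2\}$ the sandwich gives $B_n(\lambda)\le A_n(\lambda)/(1-\delta_n)$, so $|a_n(\lambda)|\le \epsilon_n B_n(\lambda)\le \tfrac{\epsilon_n}{1-\delta_n}A_n(\lambda)$, and likewise for $\hat\lambda$. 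Writing $r_n=\epsilon_n/(1-\delta_n)$, substitution and rearrangement produce $A_n(\hat\lambda)(1-r_n)\le A_n(\lambda)(1+r_n)$ for all $\lambda$; taking the infimum over $\lambda$ on the right and dividing gives
\begin{align}
1\le \frac{A_n(\hat\lambda)}{\inf_{\lambda\in\Lambda}A_n(\lambda)}\le \frac{1+r_n}{1-r_n}, \notag
\end{align}
the left inequality being automatic. Since $\delta_n=o_p(1)$ we have $1/(1-\delta_n)=O_p(1)$, whence $r_n=O_p(\eta_n)=o_p(1)$ and $\tfrac{1+r_n}{1-r_n}=1+O_p(\eta_n)$, which is \eqref{eq:lem4}.

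The step I expect to require the most care is the bookkeeping of the $O_p$ statements across the division by $1-\delta_n$: the displayed inequalities are valid only on the event $\{\delta_n<1/2\}$, so I must argue that restricting to this asymptotically-certain event is harmless, i.e.\ that a bound holding with probability tending to one, combined with the deterministic lower bound $A_n(\hat\lambda)\ge\inf_\lambda A_n(\lambda)$, still delivers a genuine $O_p(\eta_n)$ conclusion. A secondary subtlety, important when $\Lambda$ is allowed to diverge, is that the estimate on $|a_n(\hat\lambda)|$ relies on the \emph{uniform} (supremum) control in \eqref{eq:lem1}, so that the data-dependence of $\hat\lambda$ never breaks the bound; existence of the argmin is needed only to make $\hat\lambda$ meaningful, while all the analytic work rests on the three uniform and lower-bound hypotheses.
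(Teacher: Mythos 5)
Your proof is correct, and it reaches the conclusion by a genuinely different decomposition than the paper's. Both arguments start from the same minimizer inequality $A_n(\hat\lambda)+a_n(\hat\lambda)\le A_n(\lambda)+a_n(\lambda)$ and use the three hypotheses in the same roles, but the bookkeeping differs in a substantive way. Because the infimum of $A_n$ need not be attained, the paper works with the reciprocal ratio $\inf_{\lambda\in\Lambda}A_n(\lambda)/A_n(\hat\lambda)$ and introduces a near-minimizing sequence $\lambda(n)$ with slack $\vartheta_n(\lambda(n))=o_p(1)$ satisfying $\inf_\lambda A_n(\lambda)=A_n(\lambda(n))-\vartheta_n(\lambda(n))$; it then controls three additive terms, $|a_n(\lambda(n))|/(A_n(\lambda(n))-\vartheta_n)$, $|a_n(\hat\lambda)|/A_n(\hat\lambda)$ and $\vartheta_n/A_n(\hat\lambda)$, each via $\sup_\lambda|a_n(\lambda)|/B_n(\lambda)$ multiplied by inverted lower bounds such as $\inf_\lambda A_n(\lambda)/B_n(\lambda)\ge 1-O_p(\eta_n)$ and $\inf_\lambda|A_n(\lambda)-\vartheta_n|/B_n(\lambda)\ge 1-O_p(\eta_n)-\vartheta_n/\kappa$, and finally flips the reciprocal. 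Your multiplicative-sandwich route---converting (\ref{eq:lem2}) into $(1-\delta_n)B_n(\lambda)\le A_n(\lambda)\le(1+\delta_n)B_n(\lambda)$, re-expressing $|a_n(\lambda)|\le r_n A_n(\lambda)$ with $r_n=\epsilon_n/(1-\delta_n)$, and deducing $A_n(\hat\lambda)(1-r_n)\le(1+r_n)A_n(\lambda)$ for every $\lambda$---postpones the infimum to the very last step, so the near-minimizer sequence and its $\vartheta_n$ bookkeeping disappear entirely, and the lower bound $A_n(\hat\lambda)/\inf_\lambda A_n(\lambda)\ge 1$ comes for free rather than by a separate argument. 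The event-restriction subtlety you flag is handled correctly: choosing $M$ with $\Pr(\delta_n>M\eta_n)+\Pr(\epsilon_n>M\eta_n)<\varepsilon$ for large $n$, and noting $M\eta_n<1/2$ eventually, places you on an event of probability at least $1-\varepsilon$ where $r_n\le 2M\eta_n$ and hence $(1+r_n)/(1-r_n)-1\le CM\eta_n$, which is precisely the definition of $O_p(\eta_n)$; combined with the deterministic bound that the ratio is at least $1$ (and well defined, since (\ref{eq:lem3}) gives $A_n(\lambda)\ge(1-\delta_n)\kappa>0$ uniformly on that event), this yields (\ref{eq:lem4}). What each approach buys: yours is shorter, two-sided from the outset, and avoids inverting quantities of the form $1-O_p(\eta_n)-\vartheta_n/\kappa$---the paper's handling of $\vartheta_n$, which is asserted to be $o_p(1)$ without explicit construction and also appears inside denominators, is the most delicate point of its proof, and your argument sidesteps it; the paper's term-by-term version, in exchange, displays explicitly where each of (\ref{eq:lem1})--(\ref{eq:lem3}) enters the final rate. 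Like the paper, you correctly rely on the uniformity of (\ref{eq:lem1}) (the supremum over $\Lambda$) to bound $|a_n(\hat\lambda)|$ despite the data-dependence of $\hat\lambda$, so nothing breaks when $|\Lambda|$ diverges.
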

\begin{proof}
By the definition of infimum, there exist a sequence $\lambda(n)\in\Lambda$ and a non-negative sequence $\vartheta_n(\lambda(n))=o_p(1)$ such that
\begin{align} \label{eq:lem5}
    \inf_{\lambda\in\Lambda} A_n(\lambda) = A_n(\lambda(n))-\vartheta_n(\lambda(n)).
\end{align}
In addition, it follows from (\ref{eq:lem2}) that
\begin{align} \label{eq:lem6}
    \inf_{\lambda\in\Lambda} \frac{A_n(\lambda)}{B_n(\lambda)} &= \inf_{\lambda\in\Lambda}\bigg(\frac{A_n(\lambda)}{B_n(\lambda)}-1\bigg)+1 \notag \\
    &\geq -\sup_{\lambda\in\Lambda} \bigg|\frac{A_n(\lambda)}{B_n(\lambda)}-1\bigg| + 1  = 1 - O_p(\eta_n).
\end{align}
By (\ref{eq:lem3}) and (\ref{eq:lem6}), we have
\begin{align} \label{eq:lem7}
    \inf_{\lambda\in\Lambda} \frac{|A_n(\lambda)-\vartheta_n(\lambda(n))|}{B_n(\lambda)} & \geq \inf_{\lambda\in\Lambda} \frac{A_n(\lambda)-\vartheta_n(\lambda(n))} {B_n(\lambda)} \geq \inf_{\lambda\in\Lambda}\frac{A_n(\lambda)}{B_n(\lambda)} - \frac{\vartheta_n(\lambda(n))}{\inf_{\lambda\in\Lambda}B_n(\lambda)}\notag\\
    & \geq -\sup_{\lambda\in\Lambda} \bigg|\frac{A_n(\lambda)}{B_n(\lambda)}-1\bigg| + 1 - \frac{\vartheta_n(\lambda(n))}{\inf_{\lambda\in\Lambda}B_n(\lambda)}\notag\\
    & \geq 1 - O_p(\eta_n) - \frac{\vartheta_n(\lambda(n))}{\kappa}.
\end{align}
Then, we have that
\begin{align}
    \bigg|\frac{\inf_{\lambda\in\Lambda}A_n(\lambda)}{A_n(\hat{\lambda})}-1  \bigg| &= \frac{A_n(\hat{\lambda})-\inf_{\lambda\in\Lambda}A_n(\lambda)}{A_n(\hat{\lambda})} \notag\\
    & =  \frac{ \inf_{\lambda\in\Lambda}(A_n(\lambda)+a_n(\lambda))-a_n(\hat{\lambda})-\inf_{\lambda\in\Lambda}A_n(\lambda) }{A_n(\hat{\lambda}) } \notag\\
    & \leq \frac{A_n(\lambda(n))+a_n(\lambda(n))-a_n(\hat{\lambda})-A_n(\lambda(n))+\vartheta_n}{A_n(\hat{\lambda})} \notag \\
    & \leq \frac{|a_n(\lambda(n))|}{A_n(\hat{\lambda})}+\frac{|a_n(\hat{\lambda})|}{A_n(\hat{\lambda})} + \frac{\vartheta_n}{A_n(\hat{\lambda})} \notag \\
    & \leq \frac{|a_n(\lambda(n))|}{\inf_{\lambda\in\Lambda}A_n(\lambda)} + \frac{|a_n(\hat{\lambda})|}{A_n(\hat{\lambda})} + \frac{\vartheta_n}{A_n(\hat{\lambda})} \notag\\
    & \leq \frac{|a_n(\lambda(n))|}{A_n(\lambda(n))-\vartheta_n} + \frac{|a_n(\hat{\lambda})|}{A_n(\hat{\lambda})} + \frac{\vartheta_n}{A_n(\hat{\lambda})} \notag \\
    & \leq \sup_{\lambda\in\Lambda}\frac{|a_n(\lambda)|}{B_n(\lambda)}\sup_{\lambda\in\Lambda}\frac{B_n(\lambda)}{|A_n(\lambda)-\vartheta_n|} + \sup_{\lambda\in\Lambda}\frac{|a_n(\lambda)|}{B_n(\lambda)}\sup_{\lambda\in\Lambda}\frac{B_n(\lambda)}{A_n(\lambda)}\notag\\
    &\qquad\qquad +\sup_{\lambda\in\Lambda}\frac{\vartheta_n}{B_n(\lambda)}\sup_{\lambda\in\Lambda}\frac{B_n(\lambda)}{A_n(\lambda)} \notag\\
    & \leq \sup_{\lambda\in\Lambda}\frac{|a_n(\lambda)|}{B_n(\lambda)}\bigg[\inf_{\lambda\in\Lambda}\frac{|A_n(\lambda)-\vartheta_n|}{B_n(\lambda)} \bigg]^{-1} + \sup_{\lambda\in\Lambda}\frac{|a_n(\lambda)|}{B_n(\lambda)} \bigg[\inf_{\lambda\in\Lambda}\frac{A_n(\lambda)}{B_n(\lambda)}\bigg]^{-1} \notag\\
    & \qquad \qquad +\frac{\vartheta_n}{\inf_{\lambda\in\Lambda}B_n(\lambda)}\bigg[\inf_{\lambda\in\Lambda}\frac{A_n(\lambda)}{B_n(\lambda)}\bigg]^{-1},
\end{align}
where the first equality follows from the definition of $\hat{\lambda}$, the first and third inequality follows from (\ref{eq:lem5}). By (\ref{eq:lem1}), (\ref{eq:lem6}), (\ref{eq:lem7}), and $\vartheta_n(\lambda(n))=o_p(1)$,  we obtain that 
\begin{align}
    \frac{\inf_{\lambda\in\Lambda}A_n(\lambda)}{A_n(\hat{\lambda})}\leq \frac{O_p(\eta_n)}{1-O_p(\eta_n)} + \frac{O_p(\eta_n)}{1-O_p(\eta_n)-o_p(1)/\kappa} + \frac{o_p(1)/\kappa}{1-O_p(\eta_n)}. 
\end{align}
Since $\eta_n = o(1)$, we obtain $\inf_{\lambda\in\Lambda}A_n(\lambda)/A_n(\hat{\lambda})=1+O_p(\eta_n)$ based on straightforward derivation and the definition of $O_p$, which also implies $A_n(\hat{\lambda})/\inf_{\lambda\in\Lambda}A_n(\lambda)=1+O_p(\eta_n)$. Therefore, we complete the proof of Lemma \ref{lemma:1}.
\end{proof}

\subsection{Proof of Theorem \ref{theorem:1}} \label{appendix:2.2}
\begin{proof}
Define $L_n^*(\lambda) = L_n(\lambda) - 
\frac{1}{n}\sum_{i=1}^n\{Y_i-E(Y_i|X_i)\}\{Y_i+E(Y_i|X_i)\}$. We have 
\begin{align}
\hat{\lambda}&=\underset{\lambda\in\Lambda}{\operatorname{argmin}} L_n(\lambda) = \underset{\lambda\in\Lambda}{\operatorname{argmin}} L_n^*(\lambda) \notag\\
&= \underset{\lambda\in\Lambda}{\operatorname{argmin}}\{R_0(\lambda)+L_n^*(\lambda)-R_0(\lambda)\}. \notag
\end{align}
By applying Lemma \ref{lemma:1} with $A_n(\lambda)$, $B_n(\lambda)$, $a_n(\lambda)$, and $\eta_n$ being $R_0(\lambda)$, $R_0^*(\lambda)$, $L_n^*(\lambda)-R_0(\lambda)$, and $\xi_n^{-1}n^{-1/2}$, respectively, it suffices to prove
\begin{align}
    \label{eq:11}\sup_{\lambda\in\Lambda} \frac{|R_0(\lambda)-R^*_0(\lambda)|}{|R^*_0(\lambda)|}=  O(\xi_n^{-1}n^{-1/2})   =  o(1)
\end{align}
and
\begin{align}
    \label{eq:ex}\sup_{\lambda\in\Lambda}\frac{|L_n^*(\lambda)-R_0(\lambda)|}{|R^*_0(\lambda)|} =  O_p(\xi_n^{-1}n^{-1/2})  = o_p(1),
\end{align}
to establish Theorem \ref{theorem:1}. Observe that if (\ref{eq:11}) holds, the following equation 
\begin{align}
    \label{eq:12}\sup_{\lambda\in\Lambda}\frac{|L_n^*(\lambda)-R^*_0(\lambda)|}{|R^*_0(\lambda)|} = O_p(\xi_n^{-1}n^{-1/2})  = o_p(1).
\end{align}
is sufficient to prove (\ref{eq:ex}). Therefore, we aim to prove (\ref{eq:11}) and (\ref{eq:12}).

By Assumption \ref{assumption:1} and Assumption \ref{assumption:2}(ii), we obtain
\begin{align}
\label{eq:13}
    \big|f_{\lambda}(X_0,\hat{w}_n(\lambda))-f_{\lambda}(X_0,w^*(\lambda))\big| &=\bigg|(\hat{w}_n(\lambda)-w^*(\lambda))^{\top}\frac{\partial f_{\lambda}(X_0,w)}{\partial w}\Big|_{w=w^{0}}\bigg|\notag\\
    &\leq \|\hat{w}_n(\lambda)-w^*(\lambda)\|\times \bigg\|\frac{\partial f_{\lambda}(X_0,w)}{\partial w}|_{w=w^{0}}\bigg\| \notag\\
    & = O_p(n^{-1/2})\times O_p(1) = O_p(n^{-1/2}).
\end{align}
Therefore,
\begin{align}
\label{eq:14}
    &\xi_{n}^{-1} \sup_{\lambda\in\Lambda}\big|\{f_{\lambda}(X_0,\hat{w}_n(\lambda))-E(Y_0|X_0)\}^2 -\{f_{\lambda}(X_0,w^*(\lambda))-E(Y_0|X_0)\}^2\big|\notag\\
    =\ &\xi_{n}^{-1} \sup_{\lambda\in\Lambda}\Big|\big\{f_{\lambda}(X_0,\hat{w}_n(\lambda))-f_{\lambda}(X_0,w^*(\lambda))\big\}\Big[\{f_{\lambda}(X_0,\hat{w}_n(\lambda))-f_{\lambda}(X_0,w^*(\lambda))\} \notag\\
    &\qquad\qquad+2\{f_{\lambda}(X_0,w^*(\lambda))-E(Y_0|X_0)\}\Big]\Big|\notag\\
    \leq\ &\xi_{n}^{-1}\sup_{\lambda\in\Lambda} \bigg| \|\hat{w}_n(\lambda)-w^*(\lambda)\|\times \big\|\frac{\partial f_{\lambda}(X_0,w)}{\partial w}|_{w=w^{0}}\big\|\notag \\
    &\qquad \qquad  \times \Big[\|\hat{w}_n(\lambda)-w^*(\lambda)\|\times\big\|\frac{\partial  f_{\lambda}(X_0,w)}{\partial w}|_{w=w^{0}}\big\|+2\{f_{\lambda}(X_0,w^*(\lambda))-E(Y_0|X_0)\}\Big] \bigg|\notag\\
    =\ &\xi_{n}^{-1}\times [ O_p(n^{-1/2}) \times \{ O_p(n^{-1/2}) +O_p(1)\}]\notag\\
    =\ &\xi_{n}^{-1} O_p(n^{-1/2}) = o_p(1),
\end{align}
where the second inequality follows from Mean value theorem and the third equality follows from Assumption \ref{assumption:2}(i) and (\ref{eq:13}). Consequently, we have
\begin{align}
    & \sup_{\lambda\in\Lambda} \frac{|R_0(\lambda)-R^*_0(\lambda)|}{|R^*_0(\lambda)|} \notag \\
    \leq\ & \xi_n^{-1} \sup_{\lambda\in\Lambda}|R_0(\lambda)-R^*_0(\lambda)| \notag \\
    =\ & \xi_{n}^{-1} \sup_{\lambda\in\Lambda}\Big|E\big[\{f_{\lambda}(X_0,\hat{w}_n(\lambda))-E(Y_0|X_0)\}^2 -\{f_{\lambda}(X_0,w^*(\lambda))-E(Y_0|X_0)\}^2 \big]\Big|\notag\\
    \leq\ &E\Big[ \xi_{n}^{-1}\cdot \sup_{\lambda\in\Lambda}\big|\{f_{\lambda}(X_0,\hat{w}_n(\lambda))-E(Y_0|X_0)\}^2 -\{f_{\lambda}(X_0,w^*(\lambda))-E(Y_0|X_0)\}^2\big|  \Big]\notag\\
     =\ & O(\xi_n^{-1}n^{-1/2}) =  o(1), \notag
\end{align}
where the third inequality follows from Assumption \ref{assumption:3} and Jensen inequality and the forth equality follows from (\ref{eq:14}). This complete the proof of (\ref{eq:11}). 

Now we present the proof of (\ref{eq:12}). Firstly, we have
\begin{align}
\label{eq:16}
    \sup_{\lambda\in\Lambda}\frac{|L_n^*(\lambda)-R^*_0(\lambda)|}{|R^*_0(\lambda)|} \leq  \xi_n^{-1}\sup_{\lambda\in\Lambda}|L_n^*(\lambda)-R^*_0(\lambda)|.
\end{align}
Then, we examine the supremum on the right hand side of (\ref{eq:16}):
\begin{align}
\label{eq:17}
    &|L_n^*(\lambda)-R^*_0(\lambda)|\notag\\
    =\ & \bigg|\frac{1}{n}\sum_{i=1}^n\big[\{f_{\lambda}(X_i,\hat{w}_n(\lambda))-Y_i\}^2 - \{Y_i-E(Y_i|X_i)\}\{Y_i+E(Y_i|X_i)\}\big] - R^*_0(\lambda) \bigg|\notag\\
    \leq\ &\underbrace{\bigg|\frac{1}{n}\sum_{i=1}^n\big[\{f_{\lambda}(X_i,w^*(\lambda))-Y_i\}^2 -\{Y_i-E(Y_i|X_i)\}\{Y_i+E(Y_i|X_i)\}\big]-R^*_0(\lambda)\bigg|}_{(I)} \notag\\
    &\qquad \qquad +\underbrace{\bigg|\frac{1}{n}\sum_{i=1}^n\big[\{f_{\lambda}(X_i,\hat{w}_n(\lambda))-Y_i\}^2 - \{f_{\lambda}(X_i,w^*(\lambda))-Y_i\}^2\big] \bigg|}_{(II)},
\end{align}
and we address $(I)$ and $(II)$ in (\ref{eq:17}) separately. 

Regrading $(I)$, we have
\begin{align}
\label{eq:18}
    (I) &= \bigg|\frac{1}{n}\sum_{i=1}^n\big[\{f_{\lambda}(X_i,w^*(\lambda))-E(Y_i|X_i) + E(Y_i|X_i) -Y_i\}^2 \notag \\
    &\qquad\qquad\qquad - \{Y_i-E(Y_i|X_i)\}\{Y_i+E(Y_i|X_i)\}\big] - R^*_0(\lambda) \bigg|\notag\\
    & = \bigg|\frac{1}{n}\sum_{i=1}^n\big[\{f_{\lambda}(X_i,w^*(\lambda))-E(Y_i|X_i)\}^2 +2f_\lambda(X_i,w^*(\lambda))\{E(Y_i|X_i)-Y_i\}\big]
     -R^*_0(\lambda)\bigg|\notag\\
    & \leq \frac{1}{n}\Big|\sum_{i=1}^n\{f_{\lambda}(X_i,w^*(\lambda))-E(Y_i|X_i)\}^2 - nR^*_0(\lambda)\Big| +\frac{2}{n}\Big|\sum_{i=1}^n f_\lambda(X_i,w^*(\lambda))\{Y_i-E(Y_i|X_i)\}\Big|.
\end{align}
Define 
$$
S_n^{\prime}(\lambda)=\sum_{i=1}^n\{f_{\lambda}(X_i,w^*(\lambda))-E(Y_i|X_i)\}^2
$$
and
$$ 
S_n^{\prime\prime}(\lambda)=\sum_{i=1}^n f_\lambda(X_i,w^*(\lambda))\{Y_i-E(Y_i|X_i)\}.
$$
Since the samples are i.i.d., it holds that 
\begin{align}
\label{eq:19}
    E(S_n^\prime(\lambda))= \sum_{i=1}^nE(f_\lambda(X_i,w^*(\lambda))-E(Y_i|X_i))^2 = nR^*_0(\lambda).
\end{align}
It follows from (\ref{eq:18}) and (\ref{eq:19}) that 
\begin{align}
\label{eq:20}
    (I) &\leq \frac{1}{n}\big|S_n^{\prime}(\lambda)-E(S_n^{\prime}(\lambda))\big|+\frac{2}{n}|S_n^{\prime\prime}(\lambda)|.
\end{align}
Regarding first term on the right hand side of (\ref{eq:20}), for any $n$,  $\lambda^\prime\in\Lambda$, and $\delta>0$, we have
\begin{align} \label{eqnew:1}
    \Pr\Big\{\Big|n^{-1}\{S_n^{\prime}(\lambda^\prime)-E(S_n^{\prime}(\lambda^\prime))\}\Big|>\delta \Big\} &=\Pr\big\{|S_n^{\prime}(\lambda^\prime)-E(S_n^{\prime}(\lambda^\prime))|>n\delta  \big\} \notag\\
    &\leq\ n^{-2}\delta^{-2} \operatorname{var}(S_n^{\prime}(\lambda^\prime)) 
\end{align}
Then, for any $\delta^{*}>0$ and any $n$, we set $\delta = \delta_n= n^{-1}(\delta^*)^{-1} \{\operatorname{var}(S_n^{\prime}(\lambda^\prime))\}^{1/2} $, plug it into (\ref{eqnew:1}), and find it holds for any $n$ that 
\begin{align} \label{eqnew:2}
    \Pr\Big\{\big|n^{-1}\{S_n^{\prime}(\lambda^\prime)-E(S_n^{\prime}(\lambda^\prime))\}\big|> n^{-1}(\delta^{*})^{-1/2}\{\operatorname{var}(S_n^{\prime}(\lambda^\prime))\}^{1/2} \Big\} \leq \delta^{*},
\end{align}
which implies that
\begin{align}
n^{-1}|S_n^{\prime}(\lambda^\prime)-E(S_n^{\prime}(\lambda^\prime))|&=O_p(n^{-1}\{\operatorname{var}(S_n^{\prime}(\lambda^\prime))\}^{1/2})     \notag\\
& = O_p\Big(n^{-1/2}\big\{\operatorname{var}[\{f_{\lambda^\prime}(X_1,w^*(\lambda^\prime))-E(Y_1|X_1)\}^2]\big\}^{1/2}\Big)\notag\\
& = O_p(n^{-1/2}) \notag
\end{align}
for any $\lambda^\prime\in\Lambda$. Given that $\Lambda$ is a finite set, we obtain
\begin{align}
\label{eq:22}
    \xi_n^{-1}\sup_{\lambda\in\Lambda}\Big|\frac{1}{n}\{S_n^{\prime}(\lambda)-E(S_n^{\prime}(\lambda))\}\Big| = O_p(\xi_n^{-1}n^{-1/2}). 
\end{align}
For second term on the right hand side of (\ref{eq:20}), due to the property of conditional expectation, we have
\begin{align}
    E(S_n^{\prime\prime}(\lambda))&= \sum_{i=1}^n E\Big\{E\big[f_\lambda(X_i,w^*(\lambda))\{Y_i-E(Y_i|X_i)\}|X_i\big]\Big\}\notag\\
    &= \sum_{i=1}^n E\Big\{f_\lambda(X_i,w^*(\lambda))E\{Y_i-E(Y_i|X_i)|X_i\}\Big\} = 0. \notag
\end{align}
For any $\lambda^{\prime\prime}\in\Lambda$ and $\delta>0$, employing Chebyshev inequality, we have
\begin{align}
    \Pr\Big\{\big|2n^{-1}S_n^{\prime\prime}(\lambda^{\prime\prime})\big|>\delta\Big\} &=\Pr\big\{|S_n^{\prime\prime}(\lambda^{\prime\prime})|>n\delta/2 \big\} \notag\\
    &\leq 4n^{-2}\delta^{-2}\operatorname{var}(S_n^{\prime\prime}(\lambda^{\prime\prime})).
\end{align}
Similar to (\ref{eqnew:2}), this implies 
\begin{align}
    n^{-1}S_n^{\prime\prime}(\lambda^{\prime\prime}) &= O_p(n^{-1}\{\operatorname{var}(S_n^{\prime\prime}(\lambda^{\prime\prime}))\}^{1/2} ) \notag\\
    &= O_p(n^{-1/2}\{\operatorname{var}\big( f_\lambda(X_1,w^*(\lambda^{\prime\prime}))\{Y_1-E(Y_1|X_1)\}\big)\}^{1/2} )\notag\\
    &= O_p(n^{-1/2}) \notag
\end{align}
and 
\begin{align}
    \label{eq:25}
    \xi_n^{-1}\sup_{\lambda\in\Lambda}\Big|n^{-1}S_n^{\prime\prime}(\lambda)\Big| = O_p(\xi^{-1}n^{-1/2})
\end{align}
By combining (\ref{eq:20}), (\ref{eq:22}), and (\ref{eq:25}), we achieve that
\begin{align}
\label{eq:26}
  \xi_n^{-1}\sup_{\lambda\in\Lambda} (I) \leq\xi_n^{-1}\sup_{\lambda\in\Lambda}\Big|n^{-1}\{S_n^{\prime}(\lambda)-E(S_n^{\prime}(\lambda))\}\Big|+ \xi_n^{-1}\sup_{\lambda\in\Lambda}\Big|2n^{-1}S_n^{\prime\prime}(\lambda)\Big|= O_p(\xi^{-1}n^{-1/2}).
\end{align}

Now we proceed to handle $(II)$. Apply a technique similar to that in (\ref{eq:14}), we have
\begin{align}
    \sup_{\lambda\in\Lambda} (II) &=  \sup_{\lambda\in\Lambda}\frac{1}{n}\Big|\sum_{i=1}^n\big[\{f_{\lambda}(X_i,\hat{w}_n(\lambda))-Y_i\}^2-\{f_{\lambda}(X_i,w^*(\lambda))-Y_i\}^2 \big] \Big|\notag\\
    &= \sup_{\lambda\in\Lambda}\frac{1}{n}\Big|\sum_{i=1}^n\big\{f_{\lambda}(X_i,\hat{w}_n(\lambda))-f_{\lambda}(X_i,w^*(\lambda)) \big\}\notag\\
    & \qquad\qquad\qquad \times \Big[\big\{f_{\lambda}(X_i,\hat{w}_n(\lambda))-f_{\lambda}(X_i,w^*(\lambda))\big\}+2\{f_\lambda(X_i,w^*(\lambda))-Y_i\} \Big] \Big|\notag\\
    &= O_p(n^{-1})+O_p(n^{-1/2}) = O_p(n^{-1/2}), \notag
\end{align}
where the second equality follows from (\ref{eq:13}) and the last equality follows from Assumption \ref{assumption:2}(i). Then, by Asssumption \ref{assumption:4}, we obtain that 
\begin{align}
\label{eq:28}
    \xi_n^{-1} \sup_{\lambda\in\Lambda} (II) =  O_p(\xi_n^{-1}n^{-1/2}) = o_p(1).
\end{align}

Finally, by combining (\ref{eq:16}), (\ref{eq:17}), (\ref{eq:26}), and (\ref{eq:28}), we have
\begin{align}
  \sup_{\lambda\in\Lambda}\frac{|L_n^*(\lambda)-R^*_0(\lambda)|}{|R^*_0(\lambda)|}& \leq  \xi_n^{-1} \sup_{\lambda\in\Lambda} \{(I)+(II)\}\notag\\
  & \leq \xi_n^{-1} \sup_{\lambda\in\Lambda} (I) +\xi_n^{-1} \sup_{\lambda\in\Lambda} (II)\notag\\
  &=  O_p(\xi_n^{-1}n^{-1/2}) = o_p(1), \notag
\end{align}
which demonstrates that (\ref{eq:12}) holds. Therefore, we complete the proof of Theorem \ref{theorem:1}.
\end{proof}

\subsection{Proof of Theorem \ref{theorem:2}} \label{appendix:2.3}
\begin{proof}
We consider (\ref{eq:7}), the first justification in Theorem \ref{theorem:2}. Rewrite the definition of $\hat{\lambda}$ as
\begin{align}
    \hat{\lambda} = \underset{\lambda\in\Lambda}{\operatorname{argmin}} \left\{L_0(\lambda) + (L_n^*(\lambda) - L_0(\lambda)) \right\}. \notag
\end{align}
Similar to the proof of Theorem \ref{theorem:1}, applying Lemma \ref{lemma:1} with $A_n(\lambda)$, $B_n(\lambda)$, $a_n(\lambda)$, and $\eta_n$ being $L_0(\lambda)$, $R_0^*(\lambda)$, $L_n^*(\lambda)-L_0(\lambda)$ and $\xi_n^{-1}n^{-1/2}+\xi_n^{-1}n_{0}^{-1/2}$, it suffices to prove
\begin{align}
\label{eq:32}
    \sup_{\lambda\in\Lambda} \frac{|R^*_0(\lambda)-L_0(\lambda)|}{|R^*_0(\lambda)|}= O_p(\xi_n^{-1}n^{-1/2} + \xi_n^{-1}n_0^{-1/2})=o_p(1)
\end{align}
and
\begin{align}
    \label{eq:ex2}
    \sup_{\lambda\in\Lambda} \frac{|L^*_n(\lambda)-L_0(\lambda)|}{|R^*_0(\lambda)|}= O_p(\xi_n^{-1}n^{-1/2}+\xi_n^{-1}n_0^{-1/2})=o_p(1).
\end{align}
Notice that $L^*_n(\lambda)-L_0(\lambda)$ can be rewritten to $(L^*_n(\lambda)-R_0^*(\lambda)) + (R_0^*(\lambda) - L_0(\lambda))$. Besides,
according to the conclusion of Theorem \ref{theorem:1}, we have
\begin{align}
\label{eq:ex3}
    \sup_{\lambda\in\Lambda}\frac{|L_n^*(\lambda)-R^*_0(\lambda)|}{|R^*_0(\lambda)|} =  O_p(\xi_n^{-1}n^{-1/2}) =o_p(1),
\end{align}
We claim that (\ref{eq:32}) is sufficient to prove (\ref{eq:ex2}). In fact, the sufficiency is easily obtained by 
\begin{align}
    \sup_{\lambda\in\Lambda}\frac{|L_n^*(\lambda)-L_0(\lambda)|}{|R^*_0(\lambda)|} \leq \sup_{\lambda\in\Lambda}\frac{|L_n^*(\lambda)-R^*_0(\lambda)|}{|R^*_0(\lambda)|} + \sup_{\lambda\in\Lambda}\frac{|R_0^*(\lambda)-L_0(\lambda)|}{|R^*_0(\lambda)|}. \notag
\end{align}
Therefore, we come to prove (\ref{eq:32}). 

Notice that
\begin{align}
\label{eq:34}
    \sup_{\lambda\in\Lambda} \frac{|R^*_0(\lambda)-L_0(\lambda)|}{|R^*_0(\lambda)|} & \leq \xi_n^{-1} \sup_{\lambda\in\Lambda}|L_0(\lambda)-R^*_0(\lambda)|\notag\\
    & \leq \xi_n^{-1} \sup_{\lambda\in\Lambda}\left\{|L_0(\lambda)-L^*_0(\lambda)|+|L^*_0(\lambda)-R^*_0(\lambda)|\right\}. 
\end{align}
Similar to (\ref{eq:14}) in the proof of Theorem \ref{theorem:1}, we have
\begin{align}
\label{eq:35}
    & \xi_n^{-1}\sup_{\lambda\in\Lambda}|L_0(\lambda)-L^*_0(\lambda)| \notag\\
    =\ & \xi_{n}^{-1}  \sup_{\lambda\in\Lambda}\Big|n_0^{-1}\sum_{j=1}^{n_0}\big[\{f_{\lambda}(X_j^0,\hat{w}_n(\lambda))-E(Y_j^0|X_j^0)\}^2 -\{f_{\lambda}(X^0_j,w^*(\lambda))-E(Y^0_j|X^0_j)\}^2\big]\Big|\notag\\ 
    =\ & O_p(\xi_{n}^{-1}n_0^{-1/2}) =o_p(1),
\end{align}
where the second equality follows from (\ref{eq:13}) and the third equality follows from Assumption \ref{assumption:4}. Then, by the definition that $R_0^*(\lambda)=E(L_0^*(\lambda))$, it holds for $\lambda\in\Lambda$, $\delta>0$ that
\begin{align}
    \Pr\big(|L^*_0(\lambda)-R^*_0(\lambda)|>\delta\big) &\leq \delta^{-2}\operatorname{var}(L^*_0(\lambda))\notag\\ 
    &= n_0^{-1}\delta^{-2}\operatorname{var}\big[\{f_{\lambda}(X_0,\hat{w}_n(\lambda))-E(Y_0|X_0)\}^2\big]
\end{align}
By Assumption \ref{assumption:6}, we obtain 
\begin{align}
\label{eq:37}
    \xi_n^{-1}\sup_{\lambda\in\Lambda}|L^*_0(\lambda)-E\{L^*_0(\lambda)\}| = O_p(\xi_n^{-1}n_0^{-1/2}) = o_p(1).
\end{align}
Combining (\ref{eq:34}), (\ref{eq:35}), and (\ref{eq:37}), we obtain (\ref{eq:32}). Thus we complete the proof of (\ref{eq:7}). 

Now we are ready to prove (\ref{eq:ad1}) and (\ref{eq:ad2}). According to (\ref{eq:7}), we have
\begin{align} 
    \frac{L_0(\hat{\lambda})-\inf_{\lambda \in \Lambda} L_0(\lambda)}{\inf _{\lambda \in \Lambda} L_0(\lambda)} = \frac{L_0(\hat{\lambda})}{\inf _{\lambda \in \Lambda} L_0(\lambda)} -1 =o_p(1)
\end{align}
By the uniformly integrablity in Assumption \ref{assumption:7}(ii), we obtain
\begin{align} 
    \frac{L_0(\hat{\lambda})-\inf_{\lambda \in \Lambda} L_0(\lambda)}{\inf _{\lambda \in \Lambda} L_0(\lambda)}\stackrel{L_1}{\longrightarrow} 0, \notag
\end{align}
which implies
\begin{align}\label{eq:ad5}
    E\Big\{\frac{L_0(\hat{\lambda})}{\inf_{\lambda \in \Lambda} L_0(\lambda)}\Big\} \longrightarrow 1.
\end{align}
This proves (\ref{eq:ad1}). As for (\ref{eq:ad2}), we have
\begin{align}
    & E\big\{L_0(\hat{\lambda})-\inf_{\lambda \in \Lambda} L_0(\lambda)\big\}\notag\\
    \leq\ & E\big\{\frac{L_0(\hat{\lambda})-\inf_{\lambda \in \Lambda} L_0(\lambda)}{\inf_{\lambda \in \Lambda} L_0(\lambda)}\big\}\times  E\big[\inf_{\lambda \in \Lambda} L_0(\lambda) \cdot \{L_0(\hat{\lambda})-\inf_{\lambda \in \Lambda} L_0(\lambda)\}\big] \notag\\
    <\  & C\times E\big\{\frac{L_0(\hat{\lambda})-\inf_{\lambda \in \Lambda} L_0(\lambda)}{\inf_{\lambda \in \Lambda} L_0(\lambda)}\big\}\times E\{\inf_{\lambda \in \Lambda} L_0(\lambda)\} \notag
\end{align}
where the first inequality follows from Cauchy-Schwarz inequality and the second inequality follows from Assumption \ref{assumption:7}(i). This implies
\begin{align}\label{eq:ad7}
    0\leq \frac{E\big\{L_0(\hat{\lambda})-\inf_{\lambda \in \Lambda} L_0(\lambda)\big\}}{E\{\inf_{\lambda \in \Lambda} L_0(\lambda)\}} \leq C \times E\big\{\frac{L_0(\hat{\lambda})-\inf_{\lambda \in \Lambda} L_0(\lambda)}{\inf_{\lambda \in \Lambda} L_0(\lambda)}\big\}.
\end{align}
By (\ref{eq:ad5}), we know that the right hand side of (\ref{eq:ad7}) converges to 0, which also implies
\begin{align}
    \frac{E\big\{L_0(\hat{\lambda})-\inf_{\lambda \in \Lambda} L_0(\lambda)\big\}}{E\{\inf_{\lambda \in \Lambda} L_0(\lambda)\}}  \longrightarrow 0. \notag
\end{align}
Therefore, we completes the proof of Theorem \ref{theorem:2}.
\end{proof}

\section{Design of models in Image classification problems}  \label{appendix:3}
We employ the same structure for model $F_1$ and $F_2$, which follows the standard AlexNet architecture \citep{krizhevsky2012imagenet}, with necessary modifications tailored to our tasks. The model consists of two parts: feature extraction and classification. Specifically, the feature extraction part is composed of the following layers in sequence: convolutional layer 1 (32 filters of $5\times5$), convolutional layer 2 (64 filters of $3\times3$), max-pooling layer 1 (kernel size $2\times2$ and stride 2), convolutional layer 3 (96 filters of $3\times3$, convolutional layer 4 (64 filters of $3\times3$, convolutional layer 5 (32 filters of $3\times3$, max-pooling layer 1 (kernel size $2\times2$ and stride 1). The classification part comprises three fully connected layers with two ReLu acivation functions, having output sizes of 512, 128, and 1. Dropout layers are placed between the fully connected layers. An input image will be first normalized to have values in $[-1,1]$ and then processed through the feature extraction and classification components, and a sigmoid activation function is applied at the end of the network to generate the final output probabilities.

We train models $F_1$ and $F_2$ following the procedure outlined in Section \ref{sec:4.2}, using 60000 relabeled samples from the training set of the Fashion-MNIST or MNIST datasets, respectively. The achieved accuracies of $F_1$ and $F_2$ on their repective 10000 test samples exceeds 99.5\%.

\end{appendix}

\vskip 0.2in

\bibliography{reference2}

\end{document}